\def\aol{\rule[0.5865ex]{1.38ex}{0.1ex}}
\def\pdla{\mbox{\rotatebox[origin=c]{180}{$\,>\mkern-8mu\raisebox{-0.065ex}{\aol}\,$}}}
\def\pdra{\mbox{$\,>\mkern-8mu\raisebox{-0.065ex}{\aol}\,$}}
\theoremstyle{plain}
\newtheorem{thm}{Theorem}
\newtheorem{cor}[thm]{Corollary}
\newtheorem{prop}[thm]{Proposition}
\newtheorem{lemma}[thm]{Lemma}
\theoremstyle{definition}
\newtheorem{definition}[thm]{Definition}
\newtheorem{remark}[thm]{Remark}
\title{Flexible categorization using formal concept analysis and Dempster-Shafer theory}
\author[5,6]{Marcel Boersma}
\author[1,6]{Krishna Manoorkar\footnote{Krishna Manoorkar is supported by the NWO grant KIVI.2019.001 awarded to Alessandra Palmigiano.}}
\author[1,2]{Alessandra Palmigiano}
\author[1,6]{Mattia Panettiere}
\author[1]{Apostolos Tzimoulis}
\author[3,4]{Nachoem Wijnberg} 
\affil[1]{School of Business and Economics, Vrije Universiteit, Amsterdam, The Netherlands}
\affil[2]{Department of Mathematics and Applied Mathematics, University of Johannesburg, South Africa}
\affil[3]{College of Business and Economics, University of Johannesburg, South Africa}
\affil[4]{Faculty of Economics and Business, University of Amsterdam, The Netherlands}
\affil[5]{Computational Science Lab, University of Amsterdam, Amsterdam, The Netherlands}
\affil[6]{KPMG, Amstelveen, The Netherlands}
\date{}
\begin{document}
\maketitle
%
%
%

\newcommand{\bel}{\mathrm{bel}}
\newcommand{\pl}{\mathrm{pl}}
\newcommand{\mass}{{m}}
\newcommand{\redbf}[1]{\textcolor{red}{\textbf{#1}}}
\newcommand{\red}[1]{\textcolor{red}{#1}}
\newcommand{\bluebf}[1]{\textcolor{blue}{\textbf{#1}}}
\newcommand{\blue}[1]{\textcolor{blue}{#1}}
\newcommand{\marginred}[1]{\marginpar{\raggedright\tiny{\red{#1}}}}
\newcommand{\marginredbf}[1]{\marginpar{\raggedright\tiny{\redbf{#1}}}}
\newcommand{\redfootnote}[1]{\footnote{\redbf{#1}}}

\newcommand{\sabine}[1]{\textcolor{red}{\textbf{#1}}}
\newcommand{\willem}[1]{\textcolor{purple}{\textbf{#1}}}

\renewcommand{\P}{\mathcal{P}}
\newcommand{\C}{\mathsf{C}}
\renewcommand{\c}{\mathsf{c}}
\renewcommand{\d}{\mathsf{d}}
\newcommand{\up}[1]{#1^{\uparrow}}
\newcommand{\down}[1]{#1^{\downarrow}}
\newcommand{\ud}[1]{#1^{\uparrow\downarrow}}
\newcommand{\du}[1]{#1^{\downarrow\uparrow}}
\newcommand{\D}{\mathsf{D}}
\newcommand{\I}{\mathsf{I}}
\newcommand{\q}{\mathsf{q}}
\renewcommand{\u}{\mathsf{u}}
\renewcommand{\L}{\mathbb{L}}
\newcommand{\F}{\mathbb{F}}
\newcommand{\osigma}{\overline{\sigma}}
\renewcommand{\emph}{\textbf}
\newcommand{\ovR}[1]{\overline{#1}^R}
\newcommand{\ovS}[1]{\overline{#1}^S}
\newcommand{\op}{\mathbf{op}}
\newcommand{\mb}{\mathbb}
\newcommand{\mc}{\mathcal}

\newcommand{\Prop}{\mathsf{Prop}}
\newcommand{\Nom}{\mathsf{Nom}}
\newcommand{\Cnom}{\mathsf{Cnom}}

\newcommand{\low}{\mathsf{l}}
\newcommand{\cl}{\mathsf{c}}

\newcommand{\jty}{J^{\infty}}
\newcommand{\mty}{M^{\infty}}

\newcommand{\nomi}{\mathbf{i}}
\newcommand{\nomj}{\mathbf{j}}
\newcommand{\nomk}{\mathbf{k}}
\newcommand{\noma}{\mathbf{a}}

\newcommand{\cnomm}{\mathbf{m}}
\newcommand{\cnomn}{\mathbf{n}}
\newcommand{\cnomx}{\mathbf{x}}

\newcommand{\lc}{\mathbf{c}}
\newcommand{\Leq}{\mathbf{s}}
\newcommand{\nLeq}{\mathbf{n}}

\newcommand{\Cc}{\mathbb{C}}

\newcommand{\p}{\mathcal{P}}

\newcommand{\marginnote}[1]{\marginpar{\raggedright\tiny{#1}}} 

\renewcommand{\L}{\mathcal{L}}
\newcommand{\Lb}{\mathcal{L}_\Box}
\newcommand{\Lmu}{\mathcal{L}_C}
\newcommand{\Ag}{\mathsf{Ag}}
\newcommand{\Cat}{\mathsf{Cat}}
\newcommand{\Ca}{\mathsf{C}_a}
\newcommand{\Cka}{\mathsf{C}_{k,a}}

\newcommand{\nomb}{\mathbf{b}}
\newcommand{\nomx}{\mathbf{x}}
\newcommand{\val}[1]{[\![{#1}]\!]}
\newcommand{\descr}[1]{(\![{#1}]\!)}
\renewcommand{\phi}{\varphi}

\newcommand{\commment}[1]{}


\def\andol{\rule[-0.4563ex]{1.38ex}{0.1ex}}
\def\AOL{\rule[0.65ex]{1.45ex}{0.1ex}}
\def\aol{\rule[0.5865ex]{1.38ex}{0.1ex}}


\def\pRA{\mbox{$\,\,{\AOL{\mkern-0.4mu{\rotatebox[origin=c]{-90}{\raisebox{0.12ex}{$\pAND$}}}}}\,\,$}}

\def\pDRA{\mbox{$\,\,{\rotatebox[origin=c]{-90}{\raisebox{0.12ex}{$\pAND$}}{\mkern-1mu\AOL}}\,\,$}}
\def\pdra{\mbox{$\,>\mkern-8mu\raisebox{-0.065ex}{\aol}\,$}}
\def\pdla{\mbox{\rotatebox[origin=c]{180}{$\,>\mkern-8mu\raisebox{-0.065ex}{\aol}\,$}}}

\def\gSCRA{\mbox{$\succ$}}
\def\gSCLA{\mbox{$\prec$}}

\def\gscra{\mbox{$\,\raisebox{-0.065ex}{\aol}\mkern-5.5mu\raisebox{0.155ex}{${\scriptstyle{\succ}}$}\,$}}
\def\gscla{\mbox{\rotatebox[origin=c]{180}{$\,\raisebox{-0.065ex}{\aol}\mkern-5.5mu\raisebox{0.155ex}{${\scriptstyle{\succ}}$}\,$}}}

\def\gscdra{\mbox{$\,\raisebox{0.155ex}{${\scriptstyle{\succ}}$}\mkern-3.55mu\raisebox{-0.065ex}{\aol}\,$}}
\def\gscdla{\mbox{\rotatebox[origin=c]{180}{$\,\raisebox{0.155ex}{${\scriptstyle{\succ}}$}\mkern-3.55mu\raisebox{-0.065ex}{\aol}\,$}}}



\def\mANDORatom#1{\hbox{\hbox to 0pt{$#1\TriangleUp$\hss}$#1\TriangleDown$}}
\newcommand*{\mANDOR}{\mathrel{\mathchoice{\mANDORatom\displaystyle}
                                 {\mANDORatom\textstyle}
                                 {\mANDORatom\scriptstyle}
                                 {\mANDORatom\scriptscriptstyle}}}

\newcommand{\mcAND}{%
\mathrel{\ooalign{\raisebox{-0.39ex}{$\mbox{\TriangleUp}$}\cr\kern4.2pt{\raisebox{-0.13ex}{$\cdot$}}}}}
\newcommand{\mcand}{%
\mathrel{\ooalign{$\vartriangle$\cr\kern1.99pt{\raisebox{-0.17ex}{$\cdot$}}}}}

\newcommand{\mAND}{\raisebox{-0.39ex}{\mbox{\,\TriangleUp\,}}}
\newcommand{\mand}{\vartriangle}



\newcommand{\nAND}{%
\mathrel{\ooalign{$\mbox{\TriangleUp}$\cr\kern0pt$\mbox{\rotatebox[origin=c]{180}{\TriangleUp}}$}}}

\newcommand{\nand}{%
\mathrel{\ooalign{$\vartriangle$\cr\kern0pt$\triangledown$}}}

\newcommand{\mcBAND}{%
\mathrel{\ooalign{\raisebox{-0.39ex}{$\mbox{\FilledTriangleUp}$}\cr\kern4.2pt{\raisebox{-0.13ex}{${\color{white}\cdot}$}}}}}
\def\mBAND{\raisebox{-0.39ex}{\mbox{\,\FilledTriangleUp\,}}}
\def\mband{\mbox{$\mkern+2mu\blacktriangle\mkern+2mu$}}

\newcommand{\mcband}{%
\mathrel{\ooalign{$\blacktriangle$\cr\kern1.99pt{\raisebox{-0.17ex}{${\color{white}\cdot}$}}}}}

\def\mOR{\mbox{\,\rotatebox[origin=c]{-180}{\TriangleUp}\,}}
\def\mor{\mbox{$\mkern+2mu\triangledown\mkern+2mu$}}

\def\mBOR{\mbox{\,\FilledTriangleDown\,}}
\def\mbor{\mbox{$\blacktriangledown$}}

\def\mRAline{\mbox{\,\raisebox{0.43ex}{\AOL}$\mkern-2.6mu$\rotatebox[origin=c]{-90}{\TriangleUp}\,}}

\newcommand{\mcRA}{%
\mathrel{\ooalign{
                  \raisebox{-0.3ex}{$\rotatebox[origin=c]{-90}{$\mbox{{\TriangleUp}}$}$}
                                                                            \cr\kern2.7pt{\raisebox{0.2ex}{$\cdot\mkern1.3mu$}}}}}

\def\DRA{\mbox{\,\rotatebox[origin=c]{3.9999}{\TriangleRight}$\mkern-2.6mu$\raisebox{0.43ex}{\AOL}\,}}
\def\mRA{\mbox{\,\raisebox{-0.39ex}{\rotatebox[origin=c]{-90}{\TriangleUp}}\,}}
\newcommand{\mra}{\mbox{$\,-{\mkern-3mu\vartriangleright}\,$}}
\newcommand{\mbra}{\mbox{$\,-{\mkern-3mu\blacktriangleright}\,$}}

\newcommand{\mla}{\mbox{$\,{\vartriangleleft\mkern-8mu-}\,$}}

\def\mbla{\mbox{$\,\blacktriangleleft{\mkern-8mu-}\,$}}

\def\mbdra{\mbox{$\,\blacktriangleright{\mkern-8mu-}\,$}}

\newcommand{\mcra}{%
\mathrel{\ooalign{$\,{\vartriangleright\,}$\cr\kern3pt{\raisebox{0ex}{$\cdot$}}}}}

\newcommand{\mcraline}{%
-{\mkern-6mu{\mathrel{\ooalign{$\,{\vartriangleright\,}$\cr\kern3pt{\raisebox{0ex}{$\cdot$}}}}}}}

\def\mdra{\mbox{$\,\vartriangleright{\mkern-8mu-}\,$}}
\def\nra{\mbox{$\,\vartriangleright\,$}}

\newcommand{\mdraline}{%
{\mathrel{\ooalign{$\,{\vartriangleright\,}$\cr\kern3pt{\raisebox{0ex}{$\cdot$}}}}}{\mkern-6mu}-}

\newcommand{\cra}{%
\mathrel{\ooalign{$\,-{\mkern-3mu\vartriangleright\,}$\cr\kern8pt{\raisebox{0ex}{$\cdot$}}}}}
\def\mdra{\mbox{$\,\vartriangleright{\mkern-8mu-}\,$}}
\def\nra{\mbox{$\,\vartriangleright\,$}}

\def\mSRA{\mbox{\,\raisebox{0.43ex}{$\thicksim\mkern-1.3mu$}\rotatebox[origin=c]{3.9999}{\TriangleRight}\,}}
\def\nSRA{\mbox{\,\rotatebox[origin=c]{-90}{\TriangleUp}\,}}
\def\msra{\mbox{$\,\sim{\mkern-8mu\vartriangleright}\,$}}

\def\mBRA{\mbox{\,\raisebox{-0.39ex}{\rotatebox[origin=c]{-90}{\FilledTriangleUp}}\,}}
\newcommand{\mcBRA}{%
\mathrel{\ooalign{
                  \raisebox{-0.3ex}{$\rotatebox[origin=c]{-90}{$\mbox{\FilledTriangleUp}$}$}
                                                                            \cr\kern2.7pt{\raisebox{0.2ex}{${\color{white}\cdot}$}}}}}

\def\mBDRA{\mbox{\,\FilledTriangleRight\raisebox{0.43ex}{\AOL}}\,}
\def\BRA{\mbox{\,\rotatebox[origin=c]{-90}{\FilledTriangleUp}\,}}
\newcommand{\mcbra}{%
\mathrel{\ooalign{$\,-{\mkern-3mu\blacktriangleright\,}$\cr\kern8pt{\raisebox{0ex}{$\cdot$}}}}}
\def\mbdra{\mbox{$\,\blacktriangleright{\mkern-8mu-}\,$}}

\def\mLA{\mbox{\,\raisebox{-0.39ex}{\rotatebox[origin=c]{90}{\TriangleUp}}\,}}
\newcommand{\mcLA}{%
\mathrel{\ooalign{
                  \raisebox{-0.3ex}{$\rotatebox[origin=c]{90}{$\mbox{\TriangleUp}$}$}
                                                                                     \cr\kern5.5pt{\raisebox{0.2ex}{$\cdot$}}
                                                                                                                              }}}
\def\nLA{\mbox{\,\rotatebox[origin=c]{-3.9999}{\TriangleLeft}\,}}
\def\la{\mbox{$\,\vartriangleleft{\mkern-8mu-}\,$}}

\def\nla{\mbox{$\,\vartriangleleft\,$}}

\newcommand{\mcla}{%
\mathrel{\ooalign{$\,{\vartriangleleft\,}$\cr\kern5pt{\raisebox{0ex}{$\cdot$}}}}}

\newcommand{\mclaline}{%
-{\mkern-6mu{\mathrel{\ooalign{$\,{\vartriangleleft\,}$\cr\kern5pt{\raisebox{0ex}{$\cdot$}}}}}}}

\def\mdla{\mbox{$\,-{\mkern-3mu\vartriangleleft}\,$}}
\def\nla{\mbox{$\,\vartriangleleft\,$}}

\def\mSLA{\mbox{\,\raisebox{-0.4ex}{\rotatebox[origin=c]{90}{\TriangleUp}}\raisebox{0.20ex}{$\mkern-2mu\thicksim$}\,}}

\def\SLA{\mbox{\,\rotatebox[origin=c]{90}{\TriangleUp}\raisebox{0.43ex}{$\mkern-2.6mu\thicksim$}\,}}
\def\SDLA{\mbox{\,\raisebox{0.43ex}{$\thicksim\mkern-3.8mu$}\rotatebox[origin=c]{90}{\TriangleUp}\,}}

\def\SBLA{\mbox{\,\rotatebox[origin=c]{90}{\FilledTriangleUp}\raisebox{0.43ex}{$\mkern-2.6mu\thicksim$}\,}}
\def\SDBLA{\mbox{\,\raisebox{0.43ex}{$\thicksim\mkern-3.8mu$}\rotatebox[origin=c]{90}{\FilledTriangleUp}\,}}

\def\BRA{\mbox{\,$\raisebox{0.43ex}{\AOL}\mkern-2.6mu$\rotatebox[origin=c]{-90}{\FilledTriangleUp}\raisebox{0.43ex}\,}}
\def\DBRA{\mbox{\,\rotatebox[origin=c]{-90}{\FilledTriangleUp}\raisebox{0.43ex}{$\mkern-3.8mu\AOL$}\,}}

\def\nSLA{\mbox{\,\rotatebox[origin=c]{-3.9999}{\TriangleLeft}\,}}
\def\msla{\mbox{$\,\vartriangleleft{\mkern-8mu\sim}\,$}}
\def\msdla{\mbox{$\,{\sim\mkern-4.5mu\vartriangleleft}\,$}}

\def\mBLA{\mbox{\,\raisebox{-0.39ex}{\rotatebox[origin=c]{90}{\FilledTriangleUp}}\,}}
\newcommand{\mcBLA}{%
\mathrel{\ooalign{
                  \raisebox{-0.3ex}{$\rotatebox[origin=c]{90}{$\mbox{\FilledTriangleUp}$}$}
                                                                                     \cr\kern5.5pt{\raisebox{0.2ex}{${\color{white}\cdot}$}}
                                                                                                                                            }}}

\def\BLA{\mbox{\,\rotatebox[origin=c]{90}{\FilledTriangleUp}\,}}
\def\bla{\mbox{$\,\blacktriangleleft{\mkern-8mu-}\,$}}
\def\nbla{\mbox{$\,\blacktriangleleft\,$}}

\def\mBDLA{\mbox{\,\raisebox{0.43ex}{\AOL}{\FilledTriangleLeft}\,}}

\def\mSBRA{\mbox{\,\raisebox{0.43ex}{$\thicksim\mkern-1.3mu$}\FilledTriangleRight}\,}
\def\nSBRA{\mbox{\,\rotatebox[origin=c]{-90}{\FilledTriangleUp}\,}}
\def\msbra{\mbox{$\,\sim{\mkern-8mu\blacktriangleright}\,$}}

\def\mSBLA{\mbox{\,\raisebox{-0.4ex}{\rotatebox[origin=c]{90}{\FilledTriangleUp}}\raisebox{0.20ex}{$\mkern-2mu\thicksim$}\,}}
\def\nSBLA{\mbox{\,\rotatebox[origin=c]{90}{\FilledTriangleUp}\,}}
\def\msbla{\mbox{$\,\blacktriangleleft{\mkern-8mu\sim}\,$}}
\def\msbdla{\mbox{$\,{\sim\mkern-4.5mu\blacktriangleleft}\,$}}



  %
  \numberwithin{equation}{section}



\def\odotRA{\mathrel\odot\joinrel\rightarrow}
\def\astRA{\mathrel\ast\!\joinrel\rightarrow}
\def\astLA{\mathrel\leftarrow\!\!\!{\joinrel\!\ast}}
\def\astSRA{\mathrel\ast\!\!\!\joinrel{>}\,}

\newcommand{\f}{\overline}

\newcommand{\ls}{\lbrack}
\newcommand{\rs}{\rbrack}
\newcommand{\rc}{\rangle}

\newcommand{\Wbox}{\Box}
\newcommand{\Wdia}{\Diamond}
\newcommand{\Bbox}{\blacksquare}
\newcommand{\Bdia}{\Diamondblack}

\newcommand{\Mod}{\, \Delta \,}



\def\conRA{\mbox{$\,\,{\rotatebox[origin=c]{-90}{\raisebox{0.12ex}{$\pAND$}}}\,\,$}}
\def\conLA{\mbox{$\,\,{\rotatebox[origin=c]{90}{\raisebox{0.12ex}{$\pAND$}}}\,\,$}}

\def\conDRA{\mbox{$\,\,{\rotatebox[origin=c]{90}{\raisebox{0.12ex}{$\pOR$}}}\,\,$}}
\def\conDLA{\mbox{$\,\,{\rotatebox[origin=c]{-90}{\raisebox{0.12ex}{$\pOR$}}}\,\,$}}

\newcommand{\pneg}{\neg}
\newcommand{\pNEG}{\mbox{\boldmath{${\neg\,}$}}}
\newcommand{\pNEGL}{\mbox{{\boldmath{$\neg$}}$_L$\,}}
\newcommand{\pNEGR}{\mbox{{\boldmath{$\neg$}}$_R$\,}}
\newcommand{\pand}{\wedge}
\newcommand{\pAND}{\mbox{$\,\bigwedge\,$}}
\newcommand{\por}{\vee}
\newcommand{\pOR}{\mbox{$\,\bigvee\,$}}
\newcommand{\pra}{\rightarrow}
\newcommand{\plra}{\leftrightarrow}
\newcommand{\pla}{\leftarrow}
\def\pTOP{{_{\textrm{p\!\!}}\textrm{T}}}
\def\ptop{{_{\textrm{p\!\!\!}}\top}}
\def\pBOT{\textrm{\rotatebox[origin=c]{180}{T}}^{\textrm{\!p}}}
\def\pbot{\bot^{\textrm{\!\!p}}}
\def\pneg{\neg}

%
\newcommand{\gI}{%
\mathrel{\ooalign{$\mbox{T}$\cr\kern0pt$\mbox{\rotatebox[origin=c]{180}{T}}$}}}
\newcommand{\gtop}{\tau}
\newcommand{\gbot}{\rotatebox[origin=c]{180}{$\tau$}}
\newcommand{\dwn}{\rotatebox[origin=c]{180}{$\wn$}}
\newcommand{\DWN}{\rotatebox[origin=c]{180}{{\boldmath{$\wn$}}}}
\newcommand{\WN}{\rotatebox[origin=c]{}{{\boldmath{$\wn$}}}}
\def\gbwn{\mbox{$\,\accentset{\wn}{\cdot}\,$}}
\newcommand{\gBWN}{\,\accentset{\wn}{\bullet}\,}
\newcommand{\gWN}{\,\accentset{\wn}{\circ}\,}
\newcommand{\apdla}{\pdla'}

\newcommand{\gAP}{\bullet}
\newcommand{\gap}{\cdot}
\newcommand{\gC}{\between}
\newcommand{\gsc}{\mbox{$\,;\,$}}
\newcommand{\gSC}{\mbox{\boldmath{{\Large{$\,;\,$}}}}}
\newcommand{\gand}{\cap}
\newcommand{\gAND}{\mbox{$\,\bigcap\,$}}
\newcommand{\gor}{\cup}
\newcommand{\gOR}{\,\bigcup\,}
\def\gra{\mbox{$\,{\raisebox{0.065ex}{\aol}{\mkern-1.4mu{\rotatebox[origin=c]{-90}{\raisebox{0.12ex}{$\gand$}}}}}\,$}}
\def\gRA{\mbox{$\,\,{{\rotatebox[origin=c]{-90}{\raisebox{0.12ex}{$\gAND$}}}}\,\,$}}

\def\gla{\mbox{\rotatebox[origin=c]{-180}{$\,{\raisebox{0.065ex}{\aol}{\mkern-1.4mu{\rotatebox[origin=c]{-90}{\raisebox{0.12ex}{$\gand$}}}}}\,$}}}
\def\gLA{\mbox{\rotatebox[origin=c]{180}{$\,\,{{\rotatebox[origin=c]{-90}{\raisebox{0.12ex}{$\gAND$}}}}\,\,$}}}

\def\gdra{\mbox{$\,{\rotatebox[origin=c]{-90}{\raisebox{0.12ex}{$\gand$}}{\mkern-1mu\raisebox{0.065ex}{\aol}}}\,$}}
\def\gDRA{\mbox{$\,\,{\rotatebox[origin=c]{-90}{\raisebox{0.12ex}{$\gAND$}}{\mkern-0.8mu\AOL}}\,\,$}}

\def\gdla{\mbox{\rotatebox[origin=c]{-180}{$\,{\rotatebox[origin=c]{-90}{\raisebox{0.12ex}{$\gand$}}{\mkern-1mu\raisebox{0.065ex}{\aol}}}\,$}}}
\def\gDLA{\mbox{\rotatebox[origin=c]{180}{$\,\,{\rotatebox[origin=c]{-90}{\raisebox{0.12ex}{$\gAND$}}{\mkern-0.8mu\AOL}}\,\,$}}}
%

%
%
\newcommand{\pgsc}{\mbox{$\,;\,$}}
\newcommand{\pgSC}{\mbox{\Large{\boldmath{$\,;\,$}}}}
\newcommand{\AST}{\mbox{\boldmath{$\ast$}}}
\newcommand{\pgand}{\mbox{$\cap$}}
\newcommand{\pgAND}{\mbox{$\,\bigcap\,$}}
\newcommand{\pgor}{\mbox{$\cup$}}
\newcommand{\pgOR}{\mbox{$\,\bigcup\,$}}
\def\pgra{\mbox{$\,{\aol{\mkern-1.5mu{\rotatebox[origin=c]{-90}{\raisebox{0.12ex}{$\aand$}}}}}\,$}}
\def\pgRA{\mbox{$\,\,{\AOL{\mkern-0.4mu{\rotatebox[origin=c]{-90}{\raisebox{0.12ex}{$\aAND$}}}}}\,\,$}}
\def\pgla{\mbox{$\,{\rotatebox[origin=c]{180}{$\aol{\mkern-1.6mu{\rotatebox[origin=c]{-90}{\raisebox{0.12ex}{$\aand$}}}}$}}\,$}}
\def\pgLA{\mbox{\rotatebox[origin=c]{180}{$\,\,{\AOL{\mkern-0.4mu{\rotatebox[origin=c]{-90}{\raisebox{0.12ex}{$\aAND$}}}}}\,\,$}}}
\def\pgdra{\mbox{$\,{\rotatebox[origin=c]{-90}{\raisebox{0.12ex}{$\aand$}}{\mkern-1.2mu\aol}}\,$}}
\def\pgDRA{\mbox{$\,\,{\rotatebox[origin=c]{-90}{\raisebox{0.12ex}{$\aAND$}}{\mkern-0.8mu\AOL}}\,\,$}}
\def\pgdla{\mbox{\rotatebox[origin=c]{180}{$\,{\rotatebox[origin=c]{-90}{\raisebox{0.12ex}{$\aand$}}{\mkern-1.2mu\aol}}\,$}}}
\def\pgDLA{\mbox{\rotatebox[origin=c]{180}{$\,\,{\rotatebox[origin=c]{-90}{\raisebox{0.12ex}{$\aAND$}}{\mkern-0.8mu\AOL}}\,\,$}}}
%

%
\newcommand{\agand}{\mbox{$\sqcap$}}
\newcommand{\agAND}{\mbox{$\,\,\bigsqcap\,\,$}}
\newcommand{\agor}{\mbox{$\sqcup$}}
\newcommand{\agOR}{\mbox{$\,\,\bigsqcup\,\,$}}
\def\agra{\mbox{$\,{\aol{\mkern-1.5mu{\rotatebox[origin=c]{-90}{\raisebox{0.12ex}{$\agand$}}}}}\,$}}
\def\agRA{\mbox{$\,\,{\AOL{\mkern-0.4mu{\rotatebox[origin=c]{-90}{\raisebox{0.12ex}{$\agAND$}}}}}\,\,$}}
\def\agla{\mbox{$\,{\rotatebox[origin=c]{180}{$\aol{\mkern-1.6mu{\rotatebox[origin=c]{-90}{\raisebox{0.12ex}{$\agand$}}}}$}}\,$}}
\def\agLA{\mbox{\rotatebox[origin=c]{180}{$\,\,{\AOL{\mkern-0.4mu{\rotatebox[origin=c]{-90}{\raisebox{0.12ex}{$\agAND$}}}}}\,\,$}}}
\def\agdra{\mbox{$\,{\rotatebox[origin=c]{-90}{\raisebox{0.12ex}{$\agand$}}{\mkern-1.1mu\aol}}\,$}}
\def\agDRA{\mbox{$\,\,{\rotatebox[origin=c]{-90}{\raisebox{0.12ex}{$\agAND$}}{\mkern-0.7mu\AOL}}\,\,$}}
\def\agdla{\mbox{\rotatebox[origin=c]{180}{$\,{\rotatebox[origin=c]{-90}{\raisebox{0.12ex}{$\agand$}}{\mkern-1.1mu\aol}}\,$}}}
\def\agDLA{\mbox{\rotatebox[origin=c]{180}{$\,\,{\rotatebox[origin=c]{-90}{\raisebox{0.12ex}{$\agAND$}}{\mkern-0.7mu\AOL}}\,\,$}}}
\def\agTOP{{_{\textrm{g\!\!}}\textrm{T}}}
\def\agtop{{_{\textrm{g\!\!\!}}\top}}
\def\agBOT{\textrm{\rotatebox[origin=c]{180}{T}}^{\textrm{\!g}}}
\def\agbot{\bot^{\textrm{\!\!g}}}
\def\agneg{\mbox{$\mkern-0.4mu\sim\mkern-0.4mu$}}

\def\aga{\texttt{a}}
\def\agb{\texttt{b}}
\def\agc{\texttt{c}}
\def\agd{\texttt{d}}
\def\agA{\Large{\texttt{a}}}
\def\agB{\Large{\texttt{b}}}
\def\agC{\texttt{C}}
\def\agD{\texttt{D}}

\def\bulletaga{{\bullet_{\!\aga}}}
\def\circaga{{\circ_{\!\aga}}}
\def\Wdiaaga{{\Wdia_{\!\aga}}}
\def\Bdiaaga{{\Bdia_{\!\aga}}}
\def\Wboxaga{{\Wbox_{\aga}}}
\def\Bboxaga{{\Bbox_{\aga}}}
\def\andol{\rule[-0.4563ex]{1.38ex}{0.1ex}}
\def\aol{\rule[0.5865ex]{1.38ex}{0.1ex}}
\def\AOL{\rule[0.65ex]{1.45ex}{0.1ex}}
\def\orol{\rule[1.4253ex]{1.38ex}{0.1ex}}
\newcommand{\mNEG}{\mbox{$\circledast$}}
\newcommand{\mneg}{\mbox{$\ast$}}
\def\mandi{\mbox{$\mkern+2mu\vartriangle_{1}\mkern+2mu$}}
\def\mANDi{\mbox{$\,{{\bigwedge{\mkern-16.2mu\andol}_{1}}}\,\,$}}
\def\mandii{\mbox{$\mkern+2mu\vartriangle_{2}\mkern+2mu$}}
\def\mANDii{\mbox{$\,{{\bigwedge{\mkern-16.2mu\andol}_{2}}}\,\,$}}

\def\mTOP{\textrm{T}}
\def\mtop{\top}
\def\mBOT{\textrm{\rotatebox[origin=c]{180}{T}}}
\def\mbot{\bot}
\def\mneg{\rightharpoondown}


\newcommand{\RESalphaProxy}{\,\rotatebox[origin=c]{90}{$\{\rotatebox[origin=c]{-90}{$\alpha$}\}$}\,}
\newcommand{\RESBoxalphaProxy}{\,\rotatebox[origin=c]{90}{$[\rotatebox[origin=c]{-90}{$\alpha$}\}$}\,}
\newcommand{\RESDiaalphaProxy}{\,\rotatebox[origin=c]{90}{$\{\rotatebox[origin=c]{-90}{$\alpha$}\rc$}\,}

\newcommand{\RESalphajProxy}{\,\rotatebox[origin=c]{90}{$\{\rotatebox[origin=c]{-90}{$\!\alpha_{\!j}$}\,\}$}\,}

\newcommand{\RESbetaProxy}{\,\rotatebox[origin=c]{90}{$\{\rotatebox[origin=c]{-90}{$\beta$}\}$}\,}

\newcommand{\RESepsilonProxy}{\,\rotatebox[origin=c]{90}{$\{\rotatebox[origin=c]{-90}{$\varepsilon$}\}$}\,}

\newcommand{\RESalphabetaProxy}{\,\rotatebox[origin=c]{90}{$\Big\{\rotatebox[origin=c]{-90}{$\alpha\cdot\beta$}\Big\}$}\,}
\newcommand{\RESbetaalphainvProxy}{\,\rotatebox[origin=c]{90}{$\Bigg\{\rotatebox[origin=c]{-90}{$\beta\cdot\alpha^{-1}$}\Bigg\}$}\,}
\newcommand{\RESalphacupbetaProxy}{\,\rotatebox[origin=c]{90}{$\Bigg\{\rotatebox[origin=c]{-90}{$\alpha\cup\beta$}\Bigg\}$}\,}

\newcommand{\RESalphaBox}{\,\rotatebox[origin=c]{90}{$[\rotatebox[origin=c]{-90}{$\alpha$}]$}\,}
\newcommand{\RESalphaplusBox}{\,\rotatebox[origin=c]{90}{$\Big[\rotatebox[origin=c]{-90}{$\,\,\alpha^+$}\Big]$}\,}

\newcommand{\RESalphajBox}{\,\rotatebox[origin=c]{90}{$[\rotatebox[origin=c]{-90}{$\!\alpha_{\!j}$}\,]$}\,}

\newcommand{\RESalphaDia}{\,\rotatebox[origin=c]{90}{$\langle\rotatebox[origin=c]{-90}{$\alpha$}\rangle$}\,}
\newcommand{\RESalphaplusDia}{\,\rotatebox[origin=c]{90}{$\Big\langle\rotatebox[origin=c]{-90}{$\,\,\alpha^+$}\Big\rangle$}\,}

\newcommand{\RESalphajDia}{\,\rotatebox[origin=c]{90}{$\langle\rotatebox[origin=c]{-90}{$\!\alpha_{\!j}$}\,\rangle$}\,}

\newcommand{\Bigsemic}{\mbox{\Large {\bf ;}}}

\makeatletter
\newcommand{\WKnowProxy}[2]{%
  {\mathbin{\ooalign{$#1\circ#2 $\cr\hidewidth
   \raise.155ex\hbox{$#1{\scriptstyle{\ast}}#2$}\hidewidth\cr  }}}}
\makeatother

\makeatletter
\newcommand{\BKnowProxy}[2]{%
  {\mathbin{\ooalign{$#1\bullet#2 $\cr\hidewidth
   \raise.155ex\hbox{$#1{\scriptstyle{\color{white}{\ast}}}#2$}\hidewidth\cr  }}}}
\makeatother

\makeatletter
\newcommand{\WKnow}[2]{%
  {\mathbin{\ooalign{$#1\Wbox#2 $\cr\hidewidth
   {$#1\ast#2$}\hidewidth\cr  }}}}
\makeatother

\makeatletter
\newcommand{\BKnow}[2]{%
  {\mathbin{\ooalign{$#1\Bbox#2$\cr\hidewidth
   {$#1\color{white}{\ast}#2$}\hidewidth\cr  }}}}
\makeatother

\makeatletter
\newcommand{\WDualKnow}[2]{%
  {\mathbin{\ooalign{$#1\Wdia#2 $\cr\hidewidth
   {$#1\ast#2$}\hidewidth\cr  }}}}
\makeatother

\makeatletter
\newcommand{\BDualKnow}[2]{%
  {\mathbin{\ooalign{$#1\Bdia#2 $\cr\hidewidth
   {$#1\color{white}{\ast}#2$}\hidewidth\cr  }}}}
\makeatother

\newcommand{\PAIRalpha}{\{\!\!\!\,\RESalphaProxy\,\!\!\!\}}

\newcommand{\fns}{\footnotesize}
\newcommand{\mr}{\multirow}
\newcommand{\rest}{\upharpoonright}

\begin{abstract}
The framework
developed in the present paper provides a formal ground to generate and study explainable categorizations of sets of entities, based on 
the epistemic attitudes of individual agents or groups thereof. 
Based on this framework, we  discuss a machine-leaning  meta-algorithm for  outlier detection and classification which  provides local and global explanations of its results.

    {\noindent\em Keywords:  Formal Concept Analysis, Demspter-Shafer theory,  Learning algorithm, Auditing,  Categorization.} 
\end{abstract}
\section{Introduction}
\setlength{\abovedisplayskip}{0.2mm}
\setlength{\belowdisplayskip}{0.2mm}
\setlength{\abovedisplayshortskip}{0.2mm}
\setlength{\belowdisplayshortskip}{0.2mm}

Categories are cognitive tools used both by humans and machines to organize experience, understand and function in their environment, and understand and interact with each
other, by grouping things together which can be meaningfully compared and evaluated. Categorization is the basic operation humans perform e.g.~when they relate experiences/actions/objects in the present to those in the past, thereby recognizing them
as instances of the same type and being able to compare them with each other. This is what humans do when trying to understand what
an object is or does, or what a situation means, and when making judgments or decisions based on experience. Categorization is the single cognitive mechanism underlying {\em meaning}-attribution, {\em value}-attribution and {\em decision-making}, and therefore it is the background environment in which these three cognitive processes
can formally be analyzed in their relationships to one another. Nowadays, categories are key to the theories and methodologies of several research areas at the interface of social sciences and AI, not only because categories underlie a wide array of phenomena  key to these areas, spanning from the mechanisms of perception \cite{ROY2005170}, to the creation of languages \cite{ALOMARI2022103637}, social identities and cultures \cite{PEREIRA20161}, but also because category-formation is core to the development of data-analytic  techniques based on AI, such as data mining \cite{nothman2013learning,kuznetsov2013knowledge},  text analysis \cite{cimiano2005learning}, social network analysis \cite{freeman1993using,hao2015k},  which have become fundamental tools in empirical sciences. 

\paragraph{Motivation and main contribution.}
The present paper starts from the rather self-evident observations that one and the same set of entities can be categorized in very different ways, and that certain ways to categorize might be more suitable or appropriate than others for accomplishing a certain task. In the present paper, we  introduce  a formal framework specifically designed to map the space of categorization systems (over a given set of entities) in a parametric way which systematically links these systems with key aspects of agency (e.g.~goals, but we  expand on this point below), and helps to identify which categorization systems are optimal {\em relative} to
 specific tasks,  such as {\em outlier detection} in the context of financial auditing, which will be our running example. 


For instance, the experienced auditors’ evaluation of evidence is rooted in a process of categorization of the pieces of evidence (e.g.~financial transactions)  which is possibly very different from the ``official'' categorization system  through which the evidence is presented in the self-reported financial statement of the given firm. These categories provide the context of evaluation in which different 
pieces of evidence are compared with/against each other. 

\paragraph {Interrogative agendas.} The final outcome of the auditing process is the formation of a qualitative opinion, by expert auditors, on the fairness and completeness of a given firm's financial accounts. Towards the formation of their opinion, the auditors might  not attribute the same  importance to all the  features of the pieces of evidence, and they might also disagree with regard to the relative importance of certain features. 
In the present  framework, the different epistemic attitudes of agents are captured by the notion of {\em interrogative agenda} 
\cite{enqvist2012modelling}.
This notion can be understood as the `questions' that an agent sets to resolve (by gathering information) before making a decision. Interestingly, the level of {\em expertise} of an agent can be captured by how good the questions they ask are. In the present framework,  a simple way for representing an agent's interrogative agenda  is as a designated set of features, i.e.~the set of features that the agent considers significantly more relevant than others.  
However, in many cases,  an agent's agenda  may not be  realistically approximated in such a simple way, but might consist of different relevance or importance values assigned to  different sets of features. Such agendas will be represented by Dempster-Shafer mass functions. Independently of their mathematical representation,  interrogative agendas will induce categorization systems on a given set of entities, 
and hence parametrize the space of categorization systems. 

\paragraph{Background theory.} The present framework is set within {\em  Formal Concept Analysis} (FCA) \cite{wille1996formal, belohlavek1999fuzzy}, and is based on   {\em formal contexts} \cite{wille1996formal}, i.e.~structures consisting of  domains $A$ (of objects) and $X$ (of features), and binary relations $I$ between them. The running example mentioned above concerns the formal context arising from a network (bipartite graph) of financial transactions (i.e.~a {\em financial statements network}) \cite{boersma2018financial}.

Mathematically,  bipartite graphs   and formal contexts are 
isomorphic structures; moreover, both types of structures are used to represent databases \cite{wille1996formal, vskopljanac2014formal, kuznetsov2004machine, pavlopoulos2018bipartite, hayes2004bipartite, ravasz2002hierarchical, YIN2019105020, cobb2003application}. Setting the present framework on  formal contexts allows for access to a mathematically principled way to generate categorization systems, in the form of the  construction of the lattice of {\em formal concepts} of a formal context \cite{birkhoff1940lattice}, as well as a suitable  base for expanding the present formal framework to represent and support vagueness, epistemic uncertainty, evidential reasoning, and incomplete information \cite{WU200938, conradie2017toward, conradie2021rough, frittella2020toward}.   
In particular, representing categorization systems as lattices allows for
   hierarchical, rather than flat, categorizations of objects, as well as
    for   a more structured control of the categorization, based on the generation of categories from arbitrary subsets of objects or features.  

\paragraph{Meta-algorithm.} Based on the present framework, we discuss  an explainable meta-algorithm for outlier detection or classification,  introduced in \cite{acar2023meta},  in which  the categorization systems are not pre-specified but are `learned', based on the accuracy of their  prediction. This algorithm is better understood in the light of the  framework presented in this paper.

\paragraph{Related work.} Extant interdisciplinary research in cognition, psychology, and management science has drawn attention to the issue of studying the implications of different ways to categorize \cite{smith1998alternative, lewandowsky2000competing, cohen1987alternative}. 
The approach adopted in the present paper, for parametrizing the space of possible categorizations in terms of agendas, is similar to  feature selection (FS) for categorization \cite{dash1997feature,li2017feature}. However, the present paper departs  from this literature since it emphasises  the connection between the selection of features and the epistemic attitudes  of the agents.  Several FS algorithms exist, making it difficult to choose the best FS strategy for a specific task \cite{bolon2019ensembles,opitz1999feature,ben2018ensemble}.   One possible solution for this problem is to use ensembles of FS algorithms \cite{li2017feature,tang2019feature,xue2021multi,tang2014feature}. 
The present paper uses a similar `ensembling' strategy in the proposed method, discussed in Section  \ref{ssec:stability-based method}, for associating  a categorization system with a given non-crisp agendas, and a meta-learning algorithm for classification and outlier detection (cf.~Section \ref{sec:algorithms}).
In the literature in explainable AI, it has been argued  that  FS  can make algorithms more interpretable by reducing the dimensionality of inputs \cite{lazebnik2024algorithm,linardatos2020explainable,rosenfeld2021better}. In the same way, Algorithm \ref{algo:metaxai} can be used to improve the explainability of  algorithms for classification and outlier detection. An instantiation  of this meta-algorithm is used in \cite{boersma2023outlier} as  an explainable outlier detection  algorithm. 


\paragraph{ Structure of the paper.} In Section \ref{sec:Prelim}, we collect the basic definitions and facts pertaining to FCA and Dempster-Shafer theory. In Section \ref{sec:Interrogative agendas, coalitions, and categorization}, we introduce the (crisp and non-crisp) formal  framework for generating a set of categorization systems, over a given set of entities, parametrized by interrogative agendas, and introduce ways of aggregating (crisp and non-crisp) agendas. In Section \ref{sec:Example}, we illustrate this framework by means of our running example on financial statements network.  

In Section \ref{ssec:stability-based method}, we introduce a methodology for associating single categorization systems to non-crisp agendas (i.e.~Dempster-Shafer mass functions), and define a partial order on non-crisp agendas with respect to which this mapping  is order-preserving. In Section \ref{sec: orderings}, we compare this partial order to other orderings of mass functions in the literature, and study its properties with respect to the aggregated  agendas.
In Section \ref{sec:algorithms}, we discuss a meta-learning algorithm for outlier detection and classification. 
We conclude and mention some directions for future research in Section \ref{sec:Conclusion and further directions}.

\section{Preliminaries}\label{sec:Prelim}
\paragraph{Formal contexts and their concept lattices}
A {\em formal context} \cite{ganter2012formal}  is a structure $\mathbb{P} = (A, X, I)$ such that $A$ and $X$ are sets, and $I\subseteq A\times X$ is a binary relation. 
Formal contexts can be thought of as abstract representations of databases, where elements of $A$ and $X$ represent objects and features, respectively, and the relation $I$ records whether a given object has a given feature. 
Every formal context as above induces maps $I^{(1)}: \mathcal{P}(A)\to \mathcal{P}(X)$ and $I^{(0)}: \mathcal{P}(X)\to \mathcal{P}(A)$, respectively defined by the assignments 
\smallskip

{{\centering

     $I^{(1)}[B]\coloneqq\{x\in X\mid \forall a(a\in B\Rightarrow aIx)\}$ \quad  and \quad 
 $I^{(0)}[Y] \coloneqq \{a\in A\mid \forall x(x\in Y\Rightarrow aIx)\}$.

\par}}
\smallskip
 
A {\em formal concept} of $\mathbb{P}$ is a pair 
$c = (\val{c}, \descr{c})$ such that $\val{c}\subseteq A$, $\descr{c}\subseteq X$, and $I^{(1)}[\val{c}] = \descr{c}$ and $I^{(0)}\descr{c} = \val{c}$. 
A subset $B \subseteq A$ (resp.\ $Y\subseteq X$) is said to be {\em closed}, or {\em Galois-stable}, if $\mathsf{Cl}_1(B)=I^{(0)}[I^{(1)}[B]]=B$ (resp.\ $\mathsf{Cl}_2(Y)=I^{(1)}[I^{(0)}[Y]]=Y$).
The set of objects $\val{c}$ is  the {\em extension} of the concept $c$, while  the set of features $ \descr{c}$ is  its {\em intension}\footnote{The symbols $\val{c}$ and $\descr{c}$, respectively denoting the extension and the intension of a concept $c$, have been introduced and used in the context of a research line aimed at developing the logical foundations of categorization theory, by regarding formulas as names of categories (formal concepts), and interpreting them as formal concepts arising from  given formal contexts  \cite{conradie2017toward,conradie2021rough,frittella2020toward,conradie2016categories,conradie2019logic}.}. 
The set ${\mathrm{L}}(\mathbb{P})$  of the formal concepts of $\mathbb{P}$ can be partially ordered as follows: for any $c, d\in {\mathrm{L}}(\mathbb{P})$, 
\begin{equation}
c\leq d\quad \mbox{ iff }\quad \val{c}\subseteq \val{d} \quad \mbox{ iff }\quad \descr{d}\subseteq \descr{c}.
\end{equation}
With this order, ${\mathrm{L}}(\mathbb{P})$ is a complete lattice, the {\em concept lattice} $\mathbb{P}^+$ of $\mathbb{P}$. As is well known, any complete lattice $\mathbb{L}$ is isomorphic to the concept lattice $\mathbb{P}^+$ of some formal context $\mathbb{P}$ \cite{birkhoff1940lattice}. Throughout this paper, we will often identify the concept lattice $\mathbb{P}^+$ with the lattice of the extensions of its formal concepts, ordered by inclusion, and will write e.g.~$G\in \mathbb{P}^+$ for a subset $G\subseteq A$ to signify that $G$ is the extension of some formal concept of $\mathbb{P}$.

\paragraph{Discretization of continuous attributes and conceptual scaling.} \label{par:Conceptual scaling}
{The framework discussed above can also be applied to cases in which the relation $I\subseteq A\times X$ can take continuous values, modulo a process   known as {\em conceptual scaling} \cite{ganter1989conceptual}. Scaling is an important part of most FCA-based techniques and has been studied extensively \cite{ganter1989conceptual,prediger1997logical,prediger1999lattice}. Choosing the correct scaling method depends on the specific task the concept lattice is used for.}

\paragraph{Belief, plausibility and mass functions} Here we recall standard notation and terminology from Dempster-Shafer theory \cite{sentz2002combination,yager2008classic}.
\label{def:bel-func}
A {\em belief function} (cf.~\cite[Chapter 1]{shafer1976mathematical}) on a set $S$ is a map $\bel: \mathcal{P}(S)\to [0,1]$ such that 
$\bel(S)=1$,  and for every $n\in \mathbb{N}$,
\[
\bel (A_1 \cup, \cdots, \cup A_n)  \geq  
\sum_{\varnothing \neq I \subseteq \{1, \cdots, n\}}
(-1)^{|I|+1} \bel \left (\bigcap_{i \in I} A_i \right).
\]
A {\em plausibility function on} $S$ is a map $\pl: \mathcal{P}(S)\to [0,1]$ such that 
$\pl(S)=1$,  and for every $n\in \mathbb{N}$,
\[
\pl (A_1 \cup A_2 \cup, \cdots, \cup A_n)  \leq 
\sum_{\varnothing \neq I \subseteq \{1,2, \cdots ,n\} }
(-1)^{|I| +1}\pl 
\left( \bigcap_{i \in I} A_i 
\right).
\]
For any set $X$, let $\overline{X}$ be its complete $S \setminus X$. Belief and plausibility functions on sets are interchangeable notions: 
for every belief function $\bel$ as above, the assignment  $X\mapsto 1- \bel(\overline{X})$ defines a plausibility function on $S$, and for every plausibility function $\pl$ as above, the assignment  $X\mapsto 1- \pl(\overline{X})$ defines a belief function on $S$. Let $S$ be any set.

A {\em (Dempster-Shafer) mass function} is a map $\mass: \mathcal{P}(S)\to [0,1]$ such that 
$\sum_{X \subseteq S} \mass (X) = 1$.

On finite sets, belief (resp.~plausibility) functions and mass functions are interchangeable notions:  any mass function $\mass$ as above induces the belief function   $\bel_{\mass}: \mathcal{P}(S)\to [0,1]$ defined as 
\begin{equation} 
\bel_\mass(X) := \sum_{Y \subseteq X} \mass(Y) \qquad \text{ for every } X \subseteq S,
\end{equation}
and  a plausibility function 
\begin{equation} 
\pl_\mass(X) := \sum_{Y \cap X \neq \emptyset} \mass(Y) \qquad \text{ for every } X \subseteq S.
\end{equation}
Conversely, any belief function $\bel$ as above induces the mass function   $\mass_{\bel}: \mathcal{P}(S)\to [0,1]$ defined as  
\begin{equation} 
\mass_\bel (X) := \bel(X) - \sum_{Y \subseteq X} (-1)^{|X \smallsetminus Y|} \bel(Y) \quad \text{ for all } X \subseteq S.
\end{equation}
For any mass function $m:\mathcal{P}(X) \to [0,1]$, its associated {\em quality function} ${q}_m$ is  
\[
{q}_m(Y) =\sum_{Y \subseteq Z}m(Z)
\quad\quad\quad \text{ for all } Y \subseteq X.
\]

\section{Categorizations induced by interrogative agendas} \label{sec:Interrogative agendas, coalitions, and categorization} 
In this section, we introduce two  multi-agent frameworks, each of which represents the interrogative agendas associated with {\em individual} agents and {\em groups} of agents, as well as  the categorizations induced by these agendas. 
These two frameworks differ in the way interrogative agendas are represented; namely as {\em subsets} of features, and as {\em Dempster-Shafer mass functions} over the set of features, respectively. We  refer to the interrogative agendas represented in the former way as  the {\em crisp} ones, and to those represented in the latter way as the {\em non-crisp} ones.\footnote{While an interrogative agenda modelled as a subset gives rise to a single categorization system, when modelled as a Dempster Shafer mass function, it gives rise to  different priority or preference values assigned to different sets of features, which in turn induces a mass function on a whole spectrum of  categorization systems.}

\paragraph{Crisp framework.} We consider tuples $(\mathbb{P}, C, R)$
 such that $\mathbb{P} = (A,X,I) $ is a  finite (many-valued) formal context, $C$ is a finite set, and $R\subseteq X\times C$ is a binary relation.  The formal context $\mathbb{P}$ captures, as usual, the set $A$ of the objects to be categorized, and the set $X$ of features attributed to each object through the incidence relation $I$;   the set $C$ is understood as a set of agents, and the relation $R$  associates any agent $j\in C$ with their {\em (crisp) interrogative agenda}, represented as the set $X_j: = R^{-1}[j] = \{x\in X\mid xRj\}$ of features $x\in X$ which $j$ considers relevant. 

 Interrogative agendas can also be associated with {\em coalitions} of agents  as follows: for any  $c\subseteq C$, we let \[\Diamond c: = \bigcap \{X_j\mid j\in c\}\quad\quad \text{ and } {\rhd} c: = \bigcup \{X_j\mid j\in c\}\]
 denote the {\em common} and the {\em distributed} interrogative agendas associated with $c$, respectively.

 For any $Y\subseteq X$, let $\mathbb{P}_{Y}: = (A, Y, I_{Y})$, where $I_{Y}: = I\cap (A\times Y)$. Hence, if $Y$ is an interrogative agenda, $Y$ induces a categorization system on the objects of $A$, which
  is represented by the concept lattice $\mathbb{P}_{Y}^+$.

  Notice that if $Z\subseteq Y\subseteq X$, and $B\in \mathbb{P}_Z$ (i.e.~$B = \{a\in A\mid \forall x(x\in Z'\Rightarrow xI_Z a)\}$ for some $Z'\subseteq Z$) then $B\in \mathbb{P}_Y$; indeed, to show that some $Y'\subseteq Y$ exists such that $B = \{a\in A\mid \forall x(x\in Y'\Rightarrow xI_Y a)\}$, it is enough to let $Y': = Z'$. This shows that, for every $B\in\mathbb{P}^+$, the set $\mathcal{V}_B: = \{Y\subseteq X\mid B\in\mathbb{P}^+_Y\}$ is upward closed w.r.t.~inclusion. Consequently, for any coalition $c$, and  any agent $j$ in coalition $c$, 
  \begin{equation}\label{eq:Diamond-triangle}
      \mathbb{P}^+_{\Diamond c} \subseteq  \mathbb{P}^+_{X_j} \subseteq  \mathbb{P}^+_{\rhd c}
  \end{equation}
  \smallskip
  The parametric structure of $\{\mathbb{P}_Y^+\mid Y\subseteq X\}$ allows for the possibility  to select those categorization systems with `meaningful categories' relative to the task at hand: larger agendas induce finer categorization systems, capable of making more distinctions among objects, and smaller agendas induce coarser categorization systems, potentially suitable for e.g.~identifying outliers while reducing the number of false positives.

\paragraph{Non-crisp framework. } We consider tuples $(\mathbb{P}, C, \mathcal{M})$ such that $\mathbb{P}$ and $C$ are as above, and $\mathcal{M} = \{m_j: \mathcal{P}(X)\to [0. 1]\mid j\in C\}$ is a $C$-indexed set of Dempster-Shafer mass functions.
Intuitively, if  $j\in C$ and  $Y\subseteq X$, then the value $m_j(Y)$ represents agent $j$'s preference   to use the concept lattice $\mathbb{P}_Y^+$ associated with $Y$ as  a  categorization system.  The following definition captures  particularly interesting cases: 
 
\begin{definition}
For any Dempster-Shafer mass function  $m:\mathcal{P}(S) \to [0,1]$, a set $Y \subseteq S$ is a {\em focal set} of $m$ if $m(Y)>0$. If $m$ has at most one focal set $X \subset S$, then 
 $m$ is {\em simple}.
 \end{definition}
For instance, if we only have information about agent $j$'s opinion on the relevance of a certain  subset  $Y \subseteq X$  (quantified as  $\alpha \in [0,1]$), this information can be encoded by representing the agenda of  $j$  as the simple mass function $m_j$ such that $m_j(Y)=\alpha$ and $m_j(X)=1-\alpha$. Assigning the remaining mass to $X$ is  motivated by the idea that if an agent is unsure about which categorization is preferable for a given task, then the agent will prefer the finest possible categorization i.e.~the categorization generated by  all available features, so as to not ignore any features that may be relevant to the given task.
 For example, the FCA-based outlier detection algorithm used in \cite{boersma2023outlier} is more likely to flag an object  as an outlier in a finer categorization than in a coarser categorization. However, if an agent believes that a coarser categorization (i.e.~one associated with a smaller agenda) is already capable of flagging outliers, then the agent will prefer this one, for the sake of avoiding false positives. However, if an agent is unsure whether a smaller agenda suffices to flag outliers, that agent will prefer the finest categorization system available so to avoid false negatives.


If agent $j$ has opinions on the relevance of  each individual feature, the agenda of $j$ can be represented   by a  mass function $m_j: \mathcal{P}(X) \to  [0,1]$, for any $Y \subseteq X$, 
\[
m_j(Y) =\sum_{y \in Y}v_j(y),
\]
where $v_j(y)$ is the  (normalized) importance value assigned by  $j$ to each $y\in X$. 

Each $m_j\in \mathcal{M}$ induces a probability mass function $m_j': \mathcal{R} \to [0,1]$, where $\mathcal{R}: = \{\mathbb{P}_Y^+\mid Y\subseteq X\}$,  defined by the assignment $m_j'(\mathbb{P}_Y^+)= m_j(Y)$.
Intuitively, for any $Y\subseteq X$, the value $m_j'(\mathbb{P}_Y^+)$ represents the extent to which agent $j$ prefers  the categorization system induced by $Y$. Moreover, each $m_j$ induces a probability function $p_{j} : \mathcal{P}(\mathcal{P}(X))\to [0, 1]$ defined by the assignment $p_{j}(\mathcal{V}) =\sum_{ Y \in \mathcal{V}}m_j(Y)$
 for any $\mathcal{V} \subseteq \mathcal{P}(X)$.

  For any coalition $c$, we can associate (non-crisp) interrogative agendas $\oplus c$, $\Diamond c$, and $\rhd c$ with $c$ as follows:

\begin{enumerate}
    \item  For any $Y \subseteq X$, $Y\neq \varnothing$  
    \begin{equation}\label{eq:DS-rule}
     (\oplus c) ({Y}) = \frac{\sum \{\prod_{j \in c} m_j({Z}_j)\mid   \bigcap_{j \in c} Z_j=Y\}} {\sum \{\prod_{j \in c} m_j({Z}_j)\mid \bigcap_{j \in c} Z_j \neq \varnothing\}}
 \end{equation}
 and $ (\oplus c) (\varnothing) =0$. This aggregation  is the  Dempster-Shafer combination   \cite{shafer1976mathematical} of the mass functions $m_j$  for $j \in c$. The normalization in the above rule 
 allows us to ignore completely contradictory agendas (i.e.~agendas with empty intersection), thus giving more weight to issues which have consensus of the agents. However, in some scenarios, we may want to allow mass on the empty set of features (which corresponds to a categorization in which all elements are in same category). 
 The value $m(\varnothing$) describes the agent's preference for categorization with only one class. For such scenarios, rather than taking $\oplus c$ as the aggregated agenda, we may use the unnormalized mass function $\Diamond c$  defined as follows:
 
 \begin{equation}
   (\Diamond c)(Y) = \sum \{\prod_{j \in c} m_j({Z}_j)\mid \bigcap_{j \in c} Z_j=Y\}  
 \end{equation}

     \item  For any ${Y} \subseteq X$, 
 \begin{equation}\label{eq:inverse DS-rule} 
   ({\rhd} c)(Y) = \sum \{\prod_{j \in c} m_j({Z}_j)\mid  \bigcup_{j \in c} Z_j=Y\}
 \end{equation}
\end{enumerate}
\begin{remark}
Crisp agendas can be regarded as non-crisp agendas with a single focal element $Z_j$ for every agent $j$. In this case, for every coalition $c$,
the mass functions $\Diamond c$ and $\rhd c$ also have  single focal elements $\bigcap_{j \in c} Z_j$ and   $\bigcup_{j \in c} Z_j$, respectively, which coincide with the crisp agendas $\Diamond c$ and $\rhd c$. 
This justifies our use of the same symbols for these operations in the crisp and non-crisp setting.  
\end{remark}

 Several rules have been used in Dempster-Shafer theory  for aggregating preferences of different agents \cite{sentz2002combination,frittella2020toward}, and their applicability to the present framework is still widely unexplored.  A concrete scenario involving non-crisp agendas is discussed in the next section.


\section{Example: financial statements network}\label{sec:Example}
In this section, we illustrate the ideas  discussed above  by way of an example in the context of financial statements networks.
These  are a type of data structure for organizing the information contained in the journal entries of a company, which record  the transfers from one set of financial accounts to another set. These entries are generated by their underlying 
%
{\em business processes}, which can be formally defined as follows \cite{boersma2018financial,boersma2020reducing}:
\begin{equation} \label{eq:business process definition}
    a: \sum_{1\leq i\leq m}\alpha_ix_i \implies \sum_{1\leq j\leq n} \beta_jy_j
\end{equation}
where $m$ is the number of credited financial accounts, $n$ is the number of debited financial accounts, $\alpha_i$ is the relative amount with respect to the total credited, and $\beta_j$  the relative amount with respect to the total debited. The arrow  represents the flow of money between the accounts.

A {\em financial statements network} is a bipartite digraph $\mathbb{G}=(A \cup X, E)$  in which $A$ is the set of business processes,  $X$ is the set  of financial accounts, and the (many-valued) directed edges in $E \subseteq X \times A$ record information on  (the share of) a given financial
account in a given business process. Clearly, each such $\mathbb{G}=(A \cup X, E)$  can  equivalently be represented as a many-valued formal context $\mathbb{P} = (A, X, I)$ where $I$ is the converse of $E$. 
For each business process $a\in A$, the (weighted) edges between nodes $x_i$ (resp.~$y_j$) and  $a$ are the coefficients $\alpha_i$ (resp.~$\beta_j$) in \eqref{eq:business process definition}.

Consider the financial statements network 
presented in Table \ref{database table} of  Appendix \ref{sec:dataset}, with  business processes $A: = \{ a_1, a_2, \ldots, a_{12}\}$ and   financial accounts $X := \{x_1, x_2, \ldots, x_6\}$ specified as follows:

\begin{center}
    \begin{tabular}{|c|c|c|c|c|c|}
    \hline
      $x_1$ & tax & $x_2$ & revenue &
       $x_3$&cost of sales \\ \hline
       $x_4$& personnel expenses &
      $x_5$& inventory & $x_6$ &other expenses\\
      \hline
    \end{tabular}
\end{center}
Table \ref{table: context table} encodes the many-valued formal context $\mathbb{P} = (A, X, I)$ extracted from this database. Each cell of  Table \ref{table: context table}  reports the value of the  relation $I: A\times X\to [-1, 1]$,  which, for any  process $a$ and account $x$, represents the share of $x$ in $a$. 

Using interval scaling\footnote{Interval scaling is one of the methods used commonly for conceptual scaling. For more, see \cite{ganter1989conceptual}.}, we  convert the many-valued formal context $\mathbb{P} = (A, X, I)$  into the 2-valued formal context $\mathbb{P}^{(s)} = (A, X^{(s)}, I^{(s)})$, for $s\in \mathbb{N}$,   where 
$X^{(s)}:= \{x_{ik}\mid 1 \leq i \leq 6, 1 \leq k \leq s\}$, and $I^{(s)}\subseteq A\times X^{(s)}$ is such that $a I^{(s)} x_{ik}$  iff $ I(a,x_{i})\in \left[-1 +\frac{2(k-1)}{s},   -1+ \frac{2k}{s} \right]$.

The concept lattice corresponding to $\mathbb{P}^{(s)}$ when $s=5$ is shown in  Figure \ref{fig:lattice 10}. Its associated concept lattice  represents the categorization system obtained by considering all the features (financial accounts) in the database, and hence, as it faithfully captures all the information  of $\mathbb{P}^{(s)}$, it is the finest categorization obtainable by any of our proposed methods for this database.

 Consider the set of agents $C= \{j_1,j_2,j_3\}$, and let us assume that agent $j_1$ is interested in the financial accounts $x_1$, $x_2$, and $x_5$, agent $j_2$ in $x_1$, $x_2$, and $x_3$, while agent $j_3$ is interested in $x_1$,  and $x_3$ with various degrees;  their interests  can be represented by the following relation $R \subseteq X^{(s)} \times C$:
  \[
  \begin{array}{r}
  R= \{(x_{1k}, j_1), (x_{2k}, j_1), (x_{5k}, j_1),(x_{1k}, j_2),  (x_{2k}, j_2), (x_{3k}, j_2),  (x_{1k}, j_3), (x_{3k}, j_3) \mid 1 \leq k \leq s                                                          \},
  \end{array}
  \]
which gives rise to the interrogative agendas $Y_i: = R^{-1}[j_i]$ for $1\leq i\leq 3$.   Their associated categorization systems  (i.e.~the concept lattices associated with $\mathbb{P}^{(s)}_{Y_i}$)  are shown in Figure 
 \ref{fig:lattice 1}, \ref{fig:lattice 3}, and \ref{fig:lattice 5}, respectively. Let $c$  be the coalition of $j_1$, $j_2$, and $j_3$. 
Then, the  categorization systems  induced  by   the common agenda $\Diamond c$, and the distributed agenda $\rhd c$ of  coalition $c$ are shown in Figures \ref{fig:lattice 6},  and \ref{fig:lattice 2}, respectively.
  
Let us now assume that   tax is the most relevant account for $j_2$ and $j_2$, while $j_2$ believes that tax and revenues are also relevant, although somewhat less than tax alone, and $j_3$ believes that tax, revenues and expenses are very relevant. Then their interrogative agendas can be represented  by the following mass functions $m_1$, $m_2$, and $m_3$, respectively, for $1 \leq k \leq s$:
 \begin{align*}
   m_1(\{x_{1k}\}) =0.6 \quad m_1(X)=0.4, \\
m_2(\{x_{1k}\}) =0.5 \quad m_2(\{x_{1k},x_{2k}\} )=0.3 \quad m_2(X)=0.2, \\
 m_3\{x_{1k}, x_{2k}, x_{6k}\} =0.9 \quad m_3(X)=0.1, 
 \end{align*}

The most preferred categorization system (i.e.~the one with the highest induced mass) according to $m_1$ and $m_2$ is the same and is shown in   Figure \ref{fig:lattice 6}, while the most preferred categorization according to $m_3$ is shown in  Figure \ref{fig:lattice 7}. For the coalition $c$ of agents $j_1$, $j_2$, and $j_3$, the agendas  $\oplus c=\Diamond c$ and $\rhd c$ are as follows:
\begin{align*}
(\oplus c)(\{x_{1k}\})=0.8 \quad  (\oplus c)(\{x_{1k},x_{2k} \}) =0.12  \quad (\oplus c)(\{x_{1k},x_{2k},x_{6k} \}) =0.072   \quad (\oplus c)(X) = 0.008,\\
(\rhd c)(\{x_{1k},x_{2k},x_{6k} \})=0.432 \quad (\rhd c)(X)=0.568 
\end{align*} 
The most preferred categorization systems according to  the coalition agendas $\oplus c=\Diamond c$, and $\rhd c$ are in Figure \ref{fig:lattice 6}, and Figure \ref{fig:lattice 10}, respectively.

\section{The stability-based method}\label{ssec:stability-based method}
As discussed in Section \ref{sec:Interrogative agendas, coalitions, and categorization},
non-crisp interrogative agendas do not induce a single categorization, but rather, a probability distribution over a parametrized {\em set} of possible categorization systems. 
However, for many applications, it might be desirable to  obtain a {\em single}  categorization system associated with this probability distribution. The simplest  way to define such categorization system is to choose  the concept lattice with the highest preference or probability value attached to it. However, this stipulation ignores a large amount of information of interest in other alternative categorizations. Another possible solution is to estimate the importance attributed to  each feature by a given non-crisp interrogative agenda, using methods such as  plausibility transform \cite{cobb2006plausibility}, pignistic transformation \cite{klawonn1992dynamic, smets2005decision}, or decision probability transformation based on belief intervals \cite{DENG2020106427}.\footnote{The pignistic and plausibility transforms of  the non-crisp agendas {discussed in the previous section} are reported in Appendix \ref{ssec:Importance of different features in these categorizations}.} The values thus obtained, representing the relative importance  of individual features,   can then be used as weights in computing the proximity or dissimilarity between different objects, based on the features shared and not shared between them. The  dissimilarity or proximity data obtained in this way can be used to categorize objects based on any  clustering technique \cite{jain1999data,jain1988algorithms,LI201829}. However, the categorizations obtained with this method are flat (clusterings) rather than hierarchical (concept lattices).
In the present section, we propose a novel {\em stability-based method} for associating a categorization system with any non-crisp agenda.

\smallskip
Throughout the present section, we fix  a  formal context $\mathbb{P} =(A, X,I)$. Recall that, for any non-crisp agenda $m:\mathcal{P}(X) \to [0,1]$, we  let  $m':\mathcal{R} \to [0,1]$ denote its associated  probability mass function on $\mathcal{R} = \{\mathbb{P}^+_Y\mid Y\subseteq X\}$.  
 
 
 \begin{definition}
  For any $G \subseteq A$ s.t.~$G\in \mathbb{P}^+$, the {\em stability index} of $G$ is defined as follows:
 \[
 \rho_m(G) =\sum \{m'(\mathbb{P}^+_Y) \mid G \in  \mathbb{P}^+_Y \text{ for some } Y\subseteq X \}.
 \] 
 \end{definition}
 The value of $ \rho_m(G)$ can be understood as the extent to which  $G$  is a meaningful category according to the non-crisp agenda $m$. Indeed, by construction (cf.~Section \ref{sec:Interrogative agendas, coalitions, and categorization}), $G \in \mathbb{P}^+$ iff $G \in \mathbb{P}_Y^+$ for some $Y\subseteq X$. 
 Hence, intuitively, the higher the weight assigned by $m$ to the categorization systems to which $G$ pertains, the greater the extent to which  $G$ is a meaningful category according to $m$. 
 
 For any $\beta \in [0,1]$ and $m$ as above,
 a  $\beta$-{\em categorization system}  according to $m$ is the complete $\bigcap$-semilattice  (hence complete lattice) $\mathbb{L}(m,\beta)$ of $\mathbb{P}^+$ generated by the set

 \[
\mathcal{C}(m,\beta):=\{ G\in \mathbb{P}^+ \mid G \subseteq A \text{ and } \rho_m(G) \geq \beta\}.
\]
That is,   $\mathbb{L}(m,\beta)$ is the categorization system formed by taking all intersections of  the sets of objects  with stability index greater than  or equal to $\beta$. The  lattice $\mathbb{L}(m,\beta)$  can be understood  as the categorization system which (approximately) represents the given (non-crisp) agenda $m$ given a stability parameter $\beta$. Unlike  the categorization system which is assigned the highest probability by $m$ (if it exists),  this categorization system  incorporates information about other possible categorization systems as well. The  parameter $\beta$ is a `stability threshold' for a concept to be relevant.  

Let $m_1$, $m_2$, and $m_3$ be the agendas of agents $j_1$, $j_2$, and $j_3$
discussed in Section \ref{sec:Example}, and $c$ be the coalition formed by these agents.  The categorization systems associated with $m_1$, $m_2$, and $m_3$ using   the stability-based method   when $\beta=0.5$ and $s=5$  are shown in  Figures \ref{fig:lattice 6}, \ref{fig:lattice 4}, and \ref{fig:lattice 7} respectively, while those associated with  the coalition agendas $\oplus c=\Diamond c$ and ${\rhd}c$   are shown in Figures \ref{fig:lattice 6} and  \ref{fig:lattice 10}, respectively. Note that the categorization systems (lattices) associated with $m_1$ and $\oplus c$ are identical.

\begin{prop}
 For any non-crisp  agenda $m$  and all   $\beta_1, \beta_2 \in [0,1]$, if $\beta_1 \leq \beta_2$, then  $\mathcal{C}(m,\beta_2)\subseteq \mathcal{C}(m,\beta_1)$ and $\mathbb{L}(m,\beta_2)\subseteq \mathbb{L}(m,\beta_1)$.
\end{prop}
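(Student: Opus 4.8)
The plan is to prove the two inclusions directly from the definitions, relying only on the monotonicity of the threshold condition $\rho_m(G) \geq \beta$ in $\beta$ and on the fact that a generated sub-$\bigcap$-semilattice depends monotonically on its generating set.

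First I would establish the inclusion $\mathcal{C}(m,\beta_2)\subseteq \mathcal{C}(m,\beta_1)$. Fix any $G \in \mathcal{C}(m,\beta_2)$. By definition this means $G \in \mathbb{P}^+$, $G \subseteq A$, and $\rho_m(G) \geq \beta_2$. Since we are assuming $\beta_1 \leq \beta_2$, we get $\rho_m(G) \geq \beta_2 \geq \beta_1$, so $G$ satisfies the defining condition of $\mathcal{C}(m,\beta_1)$ as well, i.e.\ $G \in \mathcal{C}(m,\beta_1)$. This is the entire argument for the first inclusion; it is purely a transitivity-of-$\leq$ observation and requires nothing about the structure of $\rho_m$ beyond the fact that the defining constraint is a single inequality against the threshold.

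Next I would derive the inclusion $\mathbb{L}(m,\beta_2)\subseteq \mathbb{L}(m,\beta_1)$ from the first one. Recall that $\mathbb{L}(m,\beta)$ is defined as the complete $\bigcap$-semilattice of $\mathbb{P}^+$ generated by $\mathcal{C}(m,\beta)$, i.e.\ the smallest collection of extents closed under arbitrary intersections that contains $\mathcal{C}(m,\beta)$. The key general fact I would invoke is that this generation operation is monotone in the generating set: if $\mathcal{C}(m,\beta_2)\subseteq \mathcal{C}(m,\beta_1)$, then every element obtainable as an intersection of members of $\mathcal{C}(m,\beta_2)$ is also an intersection of members of $\mathcal{C}(m,\beta_1)$, hence belongs to the semilattice generated by the larger set. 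Concretely, since $\mathbb{L}(m,\beta_1)$ is a $\bigcap$-closed set containing $\mathcal{C}(m,\beta_1)$, and therefore containing the smaller set $\mathcal{C}(m,\beta_2)$, it follows that $\mathbb{L}(m,\beta_1)$ is a $\bigcap$-closed set containing $\mathcal{C}(m,\beta_2)$; but $\mathbb{L}(m,\beta_2)$ is by definition the \emph{smallest} such set, so $\mathbb{L}(m,\beta_2)\subseteq \mathbb{L}(m,\beta_1)$.

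I do not anticipate a genuine obstacle here, as both inclusions are essentially immediate from the definitions. The only point that warrants a sentence of care is the monotonicity of the closure operator in the second part: one should make explicit that $\mathbb{L}(m,\beta_1)$ \emph{itself} serves as a witness $\bigcap$-closed set containing $\mathcal{C}(m,\beta_2)$, so that minimality of $\mathbb{L}(m,\beta_2)$ forces the containment. This avoids having to reason element-by-element about arbitrary intersections and keeps the argument clean.
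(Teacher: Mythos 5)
Your proposal is correct and follows essentially the same route as the paper: the first inclusion is the same transitivity-of-$\leq$ observation, and the second is exactly what the paper compresses into ``follows from $\mathbb{L}(m,\beta_i)$ being meet-generated by $\mathcal{C}(m,\beta_i)$,'' which you merely spell out via the minimality/witness argument. No gaps.
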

 \begin{proof} If  $B\subseteq A$ s.t.~$B \in \mathcal{C}(m,\beta_2)$ $G \subseteq A$ is such that  $\rho_m (G) \geq \beta_2\geq \beta_1$.  Thus, $B  \in \mathcal{C}(m,\beta_1)$. The second inclusion follows from  $ \mathbb{L}(m,\beta_i)$ being meet-generated by $ \mathcal{C}(m,\beta_i)$, for $1\leq i\leq 2$.  
 \end{proof}
  The proposition above matches with the intuition that  lower  values of the stability  threshold  $\beta$ yield  finer-grained categorization systems. 

\begin{remark}
As discussed in Section \ref{sec:Interrogative agendas, coalitions, and categorization}, for any $G \subseteq A$, the set $\mathcal{V}_G:=\{Y \subseteq X\mid G \in \mathbb{P}^+_Y\}$ is upward closed w.r.t.~inclusion; however, $\mathcal{V}_G$ does not need to be closed under intersection. To see this, consider the formal context $\mathbb{P}=(A,X,I)$ with $A=\{a, b\}$, $X=\{x,y,z\}$, and $I=\{(a,x), (a,y), (a,z), (b,y)\}$ and the set $G = \{a\}$;  then $G\in \mathbb{P}^+_Y$ for $Y=\{x,y\}$, and  $G\in  \mathbb{P}^+_Z$ for  $Z=\{y,z\}$,  while $G\notin \mathbb{P}^+_{Y\cap Z}=\mathbb{P}^+_{\{y\}}$. Thus, $\{Y \mid G \in \mathbb{P}^+_Y\}$ does not necessarily have a minimum element. 
\end{remark}

 \begin{definition}\label{def:uprestricted} The {\em upward restricted order} is the partial order $\leq_\uparrow$ on non-crisp agendas defined as follows: 
 $m_1 \leq _\uparrow m_2$ iff for any  subset  $\mathcal{V} \subseteq \mathcal{P}(X) $ which is upward closed w.r.t.~inclusion, 
\[ \sum_{Y \in \mathcal{V}} m_1 (Y)  \leq \sum_{Y \in \mathcal{V}} m_2 (Y). \]
 \end{definition}

 \begin{lemma} \label{lem: rho-order}
If  $m_1 \leq_\uparrow m_2$, then $\rho_{m_1}(G) \leq \rho_{m_2}(G)$ for any $G\in\mathbb{P}^+$. 
 \end{lemma}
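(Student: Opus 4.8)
The plan is to reduce the stability index to a single sum of masses over an explicitly upward-closed family, so that the desired inequality becomes a direct instance of the defining condition of $\leq_\uparrow$. First I would rewrite $\rho_m(G)$ in order-friendly form. Since $m'(\mathbb{P}^+_Y) = m(Y)$ by definition, and writing $\mathcal{V}_G := \{Y \subseteq X \mid G \in \mathbb{P}^+_Y\}$ as in the Remark preceding the lemma, the definition of the stability index unwinds (summing over the subsets $Y$ that index $\mathcal{R}$) to
\[
\rho_m(G) = \sum_{Y \in \mathcal{V}_G} m(Y).
\]
This rewriting is the only manipulation required; everything after it is essentially definitional.

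The one structural fact I would invoke is that $\mathcal{V}_G$ is upward closed with respect to inclusion. This was already established in Section~\ref{sec:Interrogative agendas, coalitions, and categorization} and restated in the Remark above: if $Z \subseteq Y$ and $G \in \mathbb{P}^+_Z$, then $G \in \mathbb{P}^+_Y$, so membership in $\mathcal{V}_G$ is preserved under enlarging the feature set. Consequently $\mathcal{V}_G$ is an admissible test family for the order $\leq_\uparrow$, which by Definition~\ref{def:uprestricted} compares masses precisely over upward-closed subsets of $\mathcal{P}(X)$.

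Finally, I would apply the hypothesis $m_1 \leq_\uparrow m_2$ to the upward-closed set $\mathcal{V}_G$ directly, obtaining
\[
\sum_{Y \in \mathcal{V}_G} m_1(Y) \leq \sum_{Y \in \mathcal{V}_G} m_2(Y),
\]
which by the rewriting above reads exactly $\rho_{m_1}(G) \leq \rho_{m_2}(G)$. There is no genuine obstacle here: the content of the lemma is packaged entirely into the observation that $\mathcal{V}_G$ lies in the class of sets on which $\leq_\uparrow$ is defined. The only point demanding care is bookkeeping, namely confirming that the sum defining $\rho_m(G)$ ranges over exactly $\mathcal{V}_G$, i.e.\ that one sums over the indexing subsets $Y$ rather than over the (possibly coarser) collection $\mathcal{R}$ of distinct concept lattices, so that no value is dropped or double counted.
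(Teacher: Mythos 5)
Your proposal is correct and follows essentially the same route as the paper's own proof: define $\mathcal{V}_G$, rewrite $\rho_{m_i}(G)$ as $\sum_{Y \in \mathcal{V}_G} m_i(Y)$ via $m_i'(\mathbb{P}^+_Y) = m_i(Y)$, and apply the hypothesis $m_1 \leq_\uparrow m_2$ to the upward-closed family $\mathcal{V}_G$. Your closing bookkeeping remark (summing over indexing subsets $Y$ rather than over distinct lattices in $\mathcal{R}$) matches the interpretation the paper itself uses implicitly.
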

 \begin{proof}
 Let $\mathcal{V}_G :=\{Y \subseteq X \mid G \in \mathbb{P}^+_Y\}$. 
 Let $m_1'$ and $m_2'$
 be the mass functions on $\mathcal{R}$ induced by $m_1$, and $m_2$. Then, 
 \[
 \rho_{m_1}(G) = \sum_{Y \in \mathcal{V}_G} m_1'(\mathbb{P}^+_Y)= \sum_{Y \in \mathcal{V}_G} m_1(Y) \leq \sum_{Y \in \mathcal{V}_G} m_2(Y) = \sum_{Y \in \mathcal{V}_G} m_2'(\mathbb{P}^+_Y)= \rho_{m_2}(G),
 \]
where the inequality  above follows from the fact that $\mathcal{V}_G$ is  upward closed, and $m_1 \leq_\uparrow m_2$. 
 \end{proof}

 \begin{prop}\label{prop:specification-categorization} 
   If $m_1 \leq_\uparrow m_2$, then $\mathcal{C}(m_1,\beta)\subseteq \mathcal{C}(m_2,\beta)$ and $\mathbb{L}(m_1,\beta)\subseteq \mathbb{L}(m_2,\beta)$ for any   $\beta \in [0,1]$. 
 \end{prop}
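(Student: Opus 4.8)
The plan is to reduce both inclusions to the pointwise monotonicity of the stability index already established in Lemma \ref{lem: rho-order}, and then to exploit the fact that $\mathbb{L}(m,\beta)$ depends monotonically on its generating set $\mathcal{C}(m,\beta)$. Essentially all the substance of the statement is carried by the lemma; what remains is the standard observation that a meet-generation operator is monotone with respect to inclusion.

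First I would establish the inclusion $\mathcal{C}(m_1,\beta)\subseteq \mathcal{C}(m_2,\beta)$. Fix $G\in\mathcal{C}(m_1,\beta)$, so that $G\in\mathbb{P}^+$ and $\rho_{m_1}(G)\geq \beta$. Since $m_1\leq_\uparrow m_2$, Lemma \ref{lem: rho-order} yields $\rho_{m_1}(G)\leq \rho_{m_2}(G)$, whence $\rho_{m_2}(G)\geq \beta$ and therefore $G\in\mathcal{C}(m_2,\beta)$. This step is immediate once the lemma is in place and requires no further computation.

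For the second inclusion, I would argue that the map sending a subset $\mathcal{G}\subseteq\mathbb{P}^+$ to the complete $\bigcap$-subsemilattice of $\mathbb{P}^+$ it meet-generates is monotone under inclusion. Concretely, every element of $\mathbb{L}(m_1,\beta)$ is by definition an intersection $\bigcap \mathcal{H}$ of some family $\mathcal{H}\subseteq\mathcal{C}(m_1,\beta)$, where the empty intersection is understood as the top $A$ of $\mathbb{P}^+$. By the first inclusion, $\mathcal{H}\subseteq\mathcal{C}(m_2,\beta)$ as well, so $\bigcap\mathcal{H}$ is also an intersection of elements of $\mathcal{C}(m_2,\beta)$ and thus lies in $\mathbb{L}(m_2,\beta)$. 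Hence $\mathbb{L}(m_1,\beta)\subseteq\mathbb{L}(m_2,\beta)$.

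I do not anticipate a genuine obstacle: the only point requiring minor care is to interpret the generation of $\mathbb{L}(m,\beta)$ consistently, namely as closure under arbitrary meets inside the complete lattice $\mathbb{P}^+$, so that ``an intersection of elements of the generating set'' faithfully describes an arbitrary element of the generated semilattice. Once this is fixed, the set-theoretic containment is immediate, exactly mirroring the argument used for the analogous second inclusion in the preceding proposition.
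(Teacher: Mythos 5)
Your proof is correct and follows essentially the same route as the paper's: the first inclusion is obtained exactly as in the paper by applying Lemma \ref{lem: rho-order} to get $\beta\leq\rho_{m_1}(G)\leq\rho_{m_2}(G)$, and the second inclusion is the paper's (tersely stated) observation that meet-generation is monotone, which you merely spell out by writing elements of $\mathbb{L}(m_1,\beta)$ as intersections $\bigcap\mathcal{H}$ with $\mathcal{H}\subseteq\mathcal{C}(m_1,\beta)\subseteq\mathcal{C}(m_2,\beta)$.
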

 \begin{proof}
 If $G \in  \mathcal{C}(m_1,\beta)$, then, by the assumption and Lemma \ref{lem: rho-order},   $\beta\leq \rho_{m_1}(G)\leq \rho_{m_2} (G)$, hence 
 ${G} \in \mathcal{C}(m_2,\beta)$. The  second inclusion  follows from the fact that $ \mathbb{L}(m_i,\beta)$, being meet-generated by $ \mathcal{C}(m_i,\beta)$ for $1\leq i\leq 2$.  
 \end{proof}
Therefore, if $m_1 \leq _\uparrow m_2$, then, for any fixed stability parameter $\beta \in [0,1]$, the categorization system induced by $m_1$ using the stability-based method  is coarser than the one induced $m_2$. Hence,  a  $ \leq _\uparrow $-smaller agenda considers less  information  relevant for categorization. 

 \section{Orderings of mass functions and coalition agendas}
 \label{sec: orderings}
Several orderings on Dempster-Shafer mass functions have been introduced in the literature, based e.g.~on plausibility and quality functions, among others.  In the present section, we compare the ordering  $\leq_\uparrow$ defined in the previous section  with these orderings. 
 \begin{definition}[{\cite[Section 2.3]{denoeux2006cautious}}]\label{def:mass-ordering}
 For any $m_1, m_2: \mathcal{P}(X) \to [0,1]$, 
 \begin{enumerate}
     \item {\em pl-ordering}:  $m_1 \leq_{{pl}} m_2$  iff  $pl_{m_1}({Y}) \leq pl_{m_2}({Y})$ for every ${Y} \subseteq X$.
     \item  {\em q-ordering}:  $m_1 \leq_{{q}} m_2$ iff $q_{m_1}({Y}) \leq q_{m_2}({Y})$ for every ${Y} \subseteq X$.
     \item  {\em s-ordering}:  $m_1 \leq_{\mathrm{s}} m_2$ iff some square matrix $S: \mathcal{P}(X)\times \mathcal{P}(X)\to [0, 1]$ exists such that:
 \begin{enumerate}
     \item $\sum_{{W} \subseteq X} S(W,{Y}) =1$ for every $Y\subseteq  X$;
     \item for all $ W, Y \subseteq X$, if $S(W,{Y}) > 0$ then $W \subseteq {Y}$;
     \item $m_1(W)= \sum_{Y \subseteq X}S(W,Y)m_2({Y})$ for any $ W \subseteq X$. 

\end{enumerate}
     In this case, $m_1$ is said to be a {\em specialization} of $m_2$, since the value that $m_1$ assigns to a given set $W$ is the sum of all the individual fractions  $S(W,{Y})$  of the values assigned by $m_2$ to every superset $Y$ of $W$. The matrix $S$ is called a {\em specialization matrix}. 
     
     \item {\em Dempsterian specialization ordering}:  $m_1 \leq_{\mathrm{d}} m_2$ iff
     $m_1= m \cap_m m_2$ for some Dempster-Shafer mass function $m$, where $m_1 \cap m_2$ denotes the un-normalized Dempster's combination given by
     \begin{equation} \label{eq: un-normalized DS rule}
          m_1 \cap_m m_2({Y})= \sum_{{Y}_1 \cap {Y}_2={Y}}m_1({Y}_1)m_2({Y}_2).
     \end{equation}
 \end{enumerate}
  \end{definition}

 It is well known that \cite[Section 2.3]{denoeux2006cautious}
 \begin{equation} \label{eq:mass-ordering implications}
   m_1 \leq_d m_2 \implies m_1 \leq_s m_2 \implies \begin{cases} m_1 \leq_{{pl}} m_2 \\
  m_1 \leq_q m_2.
  \end{cases}
 \end{equation}

 \begin{prop}
 \label{prop:various ordering compared}
 For all mass functions $m_1, m_2: \mathcal{P}(X) \to [0, 1]$
  \begin{equation} \label{eq: new mass-ordering implications}
   m_1 \leq_d m_2 \implies m_1 \leq_s m_2\implies m_1 \leq _\uparrow m_2 
   \implies \begin{cases} m_1 \leq_{{pl}} m_2 \\
  m_1 \leq_q m_2.
  \end{cases}
 \end{equation}

 \end{prop}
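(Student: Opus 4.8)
The plan is to establish only the two new implications in the chain, since the first one, $m_1 \leq_d m_2 \implies m_1 \leq_s m_2$, is already recorded in \eqref{eq:mass-ordering implications} and can simply be invoked. So I would prove the middle implication $m_1 \leq_s m_2 \implies m_1 \leq_\uparrow m_2$ and the final implication $m_1 \leq_\uparrow m_2 \implies (m_1 \leq_{pl} m_2 \text{ and } m_1 \leq_q m_2)$ separately.

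For the middle implication I would unpack the definition of the $s$-ordering: fix a specialization matrix $S$ witnessing $m_1 \leq_s m_2$, so that $m_1(W) = \sum_{Y \subseteq X} S(W,Y)\, m_2(Y)$, with $\sum_{W \subseteq X} S(W,Y) = 1$ for each $Y$ and $S(W,Y) > 0$ only if $W \subseteq Y$. Fix an upward-closed $\mathcal{V} \subseteq \mathcal{P}(X)$. Summing the defining identity over $W \in \mathcal{V}$ and exchanging the order of summation gives
\[
\sum_{W \in \mathcal{V}} m_1(W) = \sum_{Y \subseteq X} m_2(Y) \Big( \sum_{W \in \mathcal{V}} S(W,Y) \Big).
\]
The crux is the behaviour of the inner coefficient. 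Since $S(W,Y) > 0$ forces $W \subseteq Y$, only subsets $W \subseteq Y$ contribute; and upward closure of $\mathcal{V}$ ensures that whenever some $W \in \mathcal{V}$ satisfies $W \subseteq Y$ we must have $Y \in \mathcal{V}$. Hence the inner coefficient vanishes when $Y \notin \mathcal{V}$ and is bounded above by $\sum_{W \subseteq X} S(W,Y) = 1$ when $Y \in \mathcal{V}$. As every $m_2(Y) \geq 0$, this yields $\sum_{W \in \mathcal{V}} m_1(W) \leq \sum_{Y \in \mathcal{V}} m_2(Y)$, which is precisely $m_1 \leq_\uparrow m_2$.

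For the final implication I would simply exhibit the relevant plausibility and quality level sets as upward-closed families and apply Definition \ref{def:uprestricted} directly. For fixed $Y \subseteq X$, the family $\{Z \subseteq X \mid Z \cap Y \neq \varnothing\}$ is upward closed, since enlarging $Z$ cannot destroy a nonempty intersection; applying the defining inequality of $\leq_\uparrow$ to this family gives $pl_{m_1}(Y) \leq pl_{m_2}(Y)$. Likewise $\{Z \subseteq X \mid Y \subseteq Z\}$ is the principal up-set of $Y$, hence upward closed, and the same inequality becomes $q_{m_1}(Y) \leq q_{m_2}(Y)$. Since $Y$ is arbitrary, both $\leq_{pl}$ and $\leq_q$ follow at once.

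I expect the main obstacle to be the bookkeeping in the middle implication — specifically, justifying that the inner coefficient $\sum_{W \in \mathcal{V}} S(W,Y)$ is zero off $\mathcal{V}$. This is the one place where upward closure of $\mathcal{V}$ genuinely interacts with the support condition $W \subseteq Y$ of the specialization matrix, and where the structure of $\leq_s$ is actually used; the remainder is a nonnegativity-and-normalization estimate. The final implication, by contrast, is essentially a matter of recognizing the correct upward-closed test families, and requires no further computation.
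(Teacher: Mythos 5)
Your proposal is correct and follows essentially the same route as the paper: the middle implication is established by the same double-sum manipulation (using conditions (b), (c), upward closure of $\mathcal{V}$, and the column normalization (a) of the specialization matrix), and the last implication is obtained, exactly as in the paper, by observing that $\{Z \subseteq X \mid Y \subseteq Z\}$ and $\{Z \subseteq X \mid Y \cap Z \neq \varnothing\}$ are upward closed. The only cosmetic difference is that you factor out the inner coefficient $\sum_{W \in \mathcal{V}} S(W,Y)$ and bound it by cases, whereas the paper reindexes the double sum directly; the underlying argument is identical.
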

 \begin{proof}
 We only need to prove the implications involving $\leq _\uparrow$. 
 Let us assume that $m_1 \leq_s m_2$, i.e.~a square matrix $S$ exists satisfying conditions (a)-(c) of Definition \ref{def:mass-ordering}.3. 
 %
Let $\mathcal{V} \subseteq\mathcal{P}(X) $ be upward closed. Then
\begin{center}
    \begin{tabular}{rcll}
     $\sum_{W\in\mathcal{V}}m_1(W)$& $=$& $\sum_{W\in\mathcal{V}} \sum_{W\subseteq Y}S(W,Y)m_2({Y})$ & (c) and (b) \\ &$=$&$ \sum_{Y\in\mathcal{V}}\sum_{W\subseteq Y, W\in\mathcal{V}}S(W,Y)m_2(Y)$ & $\mathcal{V}$ upward closed\\
    & $\leq$ & $\sum_{Y\in\mathcal{V}}m_2(Y)$, & (a)\\
       \end{tabular}
\end{center}
which shows that $m_1\leq_{\uparrow} m_2$. The inclusions $\leq_\uparrow\ \subseteq \ \leq_q$ and $\leq_\uparrow\ \subseteq \ \leq_{pl}$
follow immediately from the fact that the sets $\{Z \subseteq  X \mid Y \subseteq Z\}$  and $\{Z \subseteq  X \mid Y \cap  Z \neq \varnothing \}$ are upward closed w.r.t.~inclusion. 
 \end{proof}

 \begin{remark} 
 The  following examples show that the inclusions $\leq_\uparrow\ \subseteq \ \leq_q$ and $\leq_\uparrow\ \subseteq \ \leq_{pl}$  are strict.   Let $X=\{y_1, y_2,y_3\}$, and $m_1$ and $m_2$ be the following mass functions on $X$:
 \[
 m_1(\{y_1,y_3\})=0.3, \quad m_1(\{y_2,y_3\})=0.3, \quad m_1(\{y_1,y_2,y_3\})=0.2, \quad m_1(\{y_3\})=0.2\quad \text{and}
 \]
 \[
 m_2(\{y_1,y_3\})=0.1, \quad m_2(\{y_2,y_3\})=0.1, \quad m_2(\{y_1,y_2,y_3\})=0.5, \quad m_2(\{y_3\})=0.3.
 \]
 The following table reports the values of  $q_{m_1}$ and $q_{m_2}$ for all  $Y \subseteq X$.

\begin{center}  
 \begin{tabular}{|c|c|c|c|c|c|c|c|c|}
 \hline
     & $\varnothing$ & $\{y_1\}$ &  $\{y_2\}$ &  $\{y_3\}$ & $\{y_1,y_2\}$ &  $\{y_2,y_3\}$ &  $\{y_1, y_3\}$ &  $\{y_1, y_2, y_3\}$\\
     \hline
      $q_{m_1}$& 1 & 0.5 & 0.5 & 1 & 0.2 & 0.5 &0.5 & 0.2 \\
      \hline
       $q_{m_2}$& 1& 0.6 & 0.6 &  1 & 0.5 & 0.6 & 0.6 & 0.5 \\
       \hline 
 \end{tabular}
\end{center}
Therefore,  $m_1 \leq_q m_2$. However, for the upward closed set $\mathcal{V} =\{\{y_1,y_3\}, \{y_2,y_3\}, \{y_1,y_2,y_3\}\}\subseteq \mathcal{P}(X)$, we have
  \[\sum_{Y \in \mathcal{V}} m_2 (Y)  < \sum_{Y \in \mathcal{V}} m_1  (Y).\] 
Let $m_3$ and $m_4$ be the following mass functions on $X$:
\[
 m_3(\{y_1,y_2\})=0.3, \quad m_3(\{y_2,y_3\})=0.4, \quad m_3(\{y_1,y_3\})=0.3, \quad \text{and}
 \]
\[
 m_4(\{y_1\})=0.1, \quad m_4(\{y_2\})=0.2, \quad m_4(\{y_3\})=0.2, \quad m_4(\{y_1,y_2\})=0.5.
\]

 The following table reports the values of  $pl_{m_3}$ and $pl_{m_4}$ for all  $Y \subseteq X$.

\begin{center}  
 \begin{tabular}{|c|c|c|c|c|c|c|c|c|}
 \hline
     & $\varnothing$ & $\{y_1\}$ &  $\{y_2\}$ &  $\{y_3\}$ & $\{y_1,y_2\}$ &  $\{y_2,y_3\}$ &  $\{y_1, y_3\}$ &  $\{y_1, y_2, y_3\}$\\
     \hline
      $pl_{m_3}$& 0 & 0.6 & 0.7 & 0.7 &1 & 1 &1 & 1\\
      \hline
       $pl_{m_4}$& 0& 0.6 & 0.7 &  0.2 & 0.8 & 0.9 & 0.8 & 1 \\
       \hline 
 \end{tabular}
\end{center}
Therefore,  $m_4 \leq_{pl} m_3$. However, for the upward closed set $\mathcal{V} =\{ \{y_1,y_2\}, \{y_1,y_2,y_3\}\}\subseteq \mathcal{P}(X)$, we have
  \[\sum_{Y \in \mathcal{V}} m_3 (Y)  < \sum_{Y \in \mathcal{V}} m_4  (Y).\] 
We leave it as an open question to check if the converse of implication between $\leq_s$ and $\leq_\uparrow$ holds.
 \end{remark}

As an immediate consequence of  Lemma \ref{lem: rho-order} and Propositions \ref{prop:specification-categorization} and \ref{prop:various ordering compared}, we get the following
 \begin{cor} \label{cor:Dempster order}
 If  $m_1 \leq_s m_2$ or $m_1 \leq_d m_2$,  then $\rho_{m_1}(G) \leq \rho_{m_2}(G)$ for any $G\in\mathbb{P}^+$, and  $\mathcal{C}(m_1,\beta)\subseteq \mathcal{C}(m_2,\beta)$ and $\mathbb{L}(m_1,\beta)\subseteq \mathbb{L}(m_2,\beta)$ for any   $\beta \in [0,1]$.
 \end{cor}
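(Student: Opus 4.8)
The plan is to treat this as a pure composition of the three cited results, with no new combinatorial work required. First I would reduce both hypotheses to the single condition $m_1 \leq_\uparrow m_2$. Indeed, Proposition \ref{prop:various ordering compared} establishes the implication chain $m_1 \leq_d m_2 \implies m_1 \leq_s m_2 \implies m_1 \leq_\uparrow m_2$, so whichever of the two hypotheses is assumed we land on $m_1 \leq_\uparrow m_2$: the $\leq_s$ case gives it directly, and the $\leq_d$ case simply passes through $\leq_s$ first.

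Once $m_1 \leq_\uparrow m_2$ is in hand, I would invoke Lemma \ref{lem: rho-order} verbatim to obtain $\rho_{m_1}(G) \leq \rho_{m_2}(G)$ for every $G \in \mathbb{P}^+$, and then Proposition \ref{prop:specification-categorization} to obtain $\mathcal{C}(m_1,\beta) \subseteq \mathcal{C}(m_2,\beta)$ and $\mathbb{L}(m_1,\beta) \subseteq \mathbb{L}(m_2,\beta)$ for every $\beta \in [0,1]$. Both conclusions of the corollary are then immediate, and I would write the proof as a single sentence chaining these citations rather than re-deriving anything.

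The point worth flagging is that there is no genuine obstacle at this stage: all the substantive content has already been discharged upstream. In particular, the only nontrivial link in the chain is the implication $\leq_s \implies \leq_\uparrow$, which was proved in Proposition \ref{prop:various ordering compared} by expanding $m_1(W)$ through the specialization matrix $S$ and using the upward closure of the test set $\mathcal{V}$ to interchange the order of summation. Relative to that argument, and to the order-preservation already packaged in Lemma \ref{lem: rho-order} and Proposition \ref{prop:specification-categorization} (each of which itself rests on the upward closure of $\mathcal{V}_G$ noted in Section \ref{sec:Interrogative agendas, coalitions, and categorization}), the corollary is bookkeeping whose sole role is to record that the two classically studied orderings $\leq_s$ and $\leq_d$ are strong enough to drive the stability-based categorization monotonically.
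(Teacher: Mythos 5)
Your proposal is correct and is exactly the paper's argument: the paper states this corollary as ``an immediate consequence of Lemma \ref{lem: rho-order} and Propositions \ref{prop:specification-categorization} and \ref{prop:various ordering compared}'', i.e.\ the same reduction of both hypotheses to $m_1 \leq_\uparrow m_2$ via the implication chain, followed by the two order-preservation results. Nothing is missing, and your observation that the corollary is pure bookkeeping over previously discharged content matches how the paper treats it.
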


 \begin{lemma} \label{lem: combination-ordering relation}
 Let $C$ be a set of agents. For any coalition  $c\subseteq C$ and any agent $j\in c$,  
 \[\Diamond  c \leq_s m_j\quad
\text{ and  }\quad    m_j  \leq_\uparrow  \rhd c.\]
 \end{lemma}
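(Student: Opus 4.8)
The plan is to prove the two inequalities separately, exploiting the fact that $\Diamond c$ is nothing but the (unnormalized) Dempster combination $\cap_m$ of the mass functions $\{m_i\mid i\in c\}$, while $\rhd c$ is the analogous combination taken along unions rather than intersections. The first inequality will be reduced to the already-established order implications, whereas the second will be handled by a direct, non-negative reindexing argument.

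For $\Diamond c \leq_s m_j$, I would first record the two routine facts that $\cap_m$ is commutative and associative (immediate from its defining formula $(m_1\cap_m m_2)(Y)=\sum_{Y_1\cap Y_2=Y}m_1(Y_1)m_2(Y_2)$), and that the $\cap_m$-combination of mass functions is again a mass function, since its total mass factors as $\prod_{i}\big(\sum_{Z_i}m_i(Z_i)\big)=1$. Using these, I single out the fixed agent $j$ and write $\Diamond c = m \cap_m m_j$, where $m := \Diamond(c\setminus\{j\})$ is the combination of the remaining agents, itself a mass function (and $m$ is taken to be the vacuous mass function concentrated on $X$ when $c=\{j\}$). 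This is precisely the defining condition for the Dempsterian specialization order, so $\Diamond c \leq_d m_j$, and the implication $\leq_d\,\subseteq\,\leq_s$ of Proposition \ref{prop:various ordering compared} yields $\Diamond c \leq_s m_j$.

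For $m_j \leq_\uparrow \rhd c$, I would argue directly from Definition \ref{def:uprestricted}. Fix an upward closed $\mathcal{V}\subseteq\mathcal{P}(X)$. Expanding the definition of $\rhd c$ and regrouping the sum over all tuples $(Z_i)_{i\in c}$ by the value of $\bigcup_{i\in c}Z_i$ gives
\[
\sum_{Y \in \mathcal{V}} (\rhd c)(Y) = \sum \Big\{ \prod_{i \in c} m_i(Z_i) \;\Big|\; \bigcup_{i \in c} Z_i \in \mathcal{V} \Big\},
\]
while normalization of the factors $m_i$ with $i\neq j$ lets me rewrite the left-hand side as
\[
\sum_{Y \in \mathcal{V}} m_j(Y) = \sum \Big\{ \prod_{i \in c} m_i(Z_i) \;\Big|\; Z_j \in \mathcal{V} \Big\},
\]
both sums ranging over tuples $(Z_i)_{i\in c}$. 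Since $Z_j \subseteq \bigcup_{i\in c}Z_i$ and $\mathcal{V}$ is upward closed, every tuple with $Z_j\in\mathcal{V}$ also satisfies $\bigcup_{i\in c}Z_i\in\mathcal{V}$; as all summands $\prod_i m_i(Z_i)$ are non-negative, the first sum is bounded above by the second, giving $m_j \leq_\uparrow \rhd c$.

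The conceptual content is light: the first part rests on recognizing $\Diamond c$ as a Dempster combination and then citing Proposition \ref{prop:various ordering compared}, and the second is a monotonicity-of-sums argument over a larger index set. The step demanding the most care is the bookkeeping in the two displayed re-groupings of the multi-agent sums, in particular justifying that fixing $Z_j\in\mathcal{V}$ and summing the remaining factors to $1$ reproduces exactly the marginal $\sum_{Y\in\mathcal{V}}m_j(Y)$, and that the union constraint genuinely enlarges the index set relative to the $Z_j$-constraint. I do not expect a substantive obstacle beyond this indexing.
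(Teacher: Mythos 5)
Your proposal is correct, but it takes a visibly different route from the paper's proof on both halves. For $\Diamond c \leq_s m_j$, the paper argues by induction on $|c|$, reducing to the binary claim $m_1 \cap_m m_2 \leq_s m_1$, which it proves by explicitly exhibiting a specialization matrix, namely $S(W,Y)= \sum \{m_2(Y') \mid Y' \cap Y = W \}$, and checking the conditions of Definition \ref{def:mass-ordering}; you instead use associativity and commutativity of $\cap_m$ to write $\Diamond c = \Diamond(c\setminus\{j\}) \cap_m m_j$, which is literally the definition of $\Diamond c \leq_d m_j$, and then invoke the implication $\leq_d\,\Rightarrow\,\leq_s$ of Proposition \ref{prop:various ordering compared}. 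Your version is shorter and in fact delivers the stronger conclusion $\Diamond c \leq_d m_j$; the paper's version buys self-containment, since the implication $\leq_d\,\Rightarrow\,\leq_s$ is imported from the literature rather than proved, whereas the explicit matrix construction reproves exactly the instance needed. For $m_j \leq_\uparrow \rhd c$, the paper again inducts on $|c|$, isolating the binary fact $m_1 \leq_\uparrow m_1 \cup_m m_2$ (mass is only transferred upward under $\cup_m$, so mass in an up-set stays there); you prove the same monotonicity in one shot by regrouping the full multi-agent sum over tuples $(Z_i)_{i\in c}$ and comparing the index sets $\{(Z_i)_i \mid Z_j \in \mathcal{V}\} \subseteq \{(Z_i)_i \mid \bigcup_{i\in c} Z_i \in \mathcal{V}\}$. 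The underlying idea is identical, but your global bookkeeping avoids both the induction and any appeal to transitivity of the orders, at the cost of heavier multi-index notation; the paper's inductive decomposition keeps each step binary and mirrors the structure of the first half. Both arguments are complete and sound, including the two marginalization steps you flagged as delicate.
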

 \begin{proof}

Let us show the first inequality by induction on the cardinality of  $c$. The base case in which $c$ is a singleton is trivial. 
For the inductive step, note that if $c' = c \cup \{i\}$, then $\Diamond (c')= \Diamond  c \cap_m m_i$. Thus,  it is enough to show that for   any two mass functions $m_1$ and $m_2$, $m_1 \cap_m m_2 \leq_s m_1$. 
The proof follows by setting
\[
S(W,Y)= \sum \{m_2(Y') \mid Y' \cap Y = W  \}.
\]
It is straightforward to check that $S$ satisfies all the conditions required in Definition \ref{def:mass-ordering}.

Let us show the second inequality by induction on the cardinality of  $c$. The base case in which $c$ is a singleton is trivial. 

For any mass functions $m_1$ and $m_2$, let $m_1 \cup_m m_2$ be the mass defined as follows: for any $Y\subseteq X$, 
$m_1 \cup_m m_2(Y)=\sum_{Z_1 \cup Z_2 =Y}m_1(Z_1)m_2(Z_2)$. For the inductive step,  if $c' = c \cup \{i\}$, then $\rhd (c')= (\rhd c )\cup_m m_i$. Thus,  it is enough to show that  $m_1 \leq_s m_1 \cup_m m_2 $ for   all mass functions $m_1$ and $m_2$. 

For any $Y \subseteq X$, the mass $m_1(Y)$ is transferred completely to the sets larger than or equal to $Y$ in performing the operation $\cup$. Thus, any mass assigned by $m_1$ to any set in an up-set $\mathcal{V}$ remains in $\mathcal{V}$ in $m_1 \cup_m m_2$. In particular, for every $Y\in\mathcal{V}$ and every $Y'\subseteq X$, $Y\cup Y'\in\mathcal{V}$. Therefore, 
 \[\sum_{Y \in \mathcal{V}} m_1 (Y) =\sum_{Y \in \mathcal{V}}(\sum_{Y'\subseteq X}m_2(Y'))m_1(Y)  \leq 
 \sum_{Y \in \mathcal{V}}\sum_{Y_1\cup Y_2=Y}m_1(Y_1)m_2(Y_2)= \sum_{Y \in \mathcal{V}} (m_1 \cup_m m_2) (Y).\]

 \end{proof}

The next corollary follows immediately from Lemma \ref{lem: combination-ordering relation} and Proposition \ref{prop:specification-categorization}.

 \begin{cor} \label{cor:intersection-union masses}
 Let $C$ be a set of agents. For any coalition  $c\subseteq C$, any agent $j\in c$, and any $\beta \in [0,1]$, \[\mathcal{C}( \Diamond  c,\beta)\subseteq  \mathcal{C}(m_j,\beta) \subseteq \mathcal{C}( \rhd c,\beta)\quad \text{  and }\quad \mathbb{L}( \Diamond c,\beta)\subseteq\mathbb{L}( m_j ,\beta)\subseteq \mathbb{L}( \rhd c,\beta).\]

 \end{cor}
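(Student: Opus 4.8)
The plan is to derive both chains of inclusions purely by assembling the two ingredients the statement advertises, namely Lemma \ref{lem: combination-ordering relation} and Proposition \ref{prop:specification-categorization}, after first translating everything into the single order $\leq_\uparrow$. The only care needed is to route the $\leq_s$ relation through the implication chain of Proposition \ref{prop:various ordering compared}, since Proposition \ref{prop:specification-categorization} is phrased in terms of $\leq_\uparrow$, not $\leq_s$.

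Concretely, I would first invoke Lemma \ref{lem: combination-ordering relation}, which gives, for every agent $j\in c$, the two relations $\Diamond c \leq_s m_j$ and $m_j \leq_\uparrow \rhd c$. The right-hand relation is already in the form needed. For the left-hand one, I would apply the implication $\leq_s\ \subseteq\ \leq_\uparrow$ established in Proposition \ref{prop:various ordering compared} to upgrade $\Diamond c \leq_s m_j$ to $\Diamond c \leq_\uparrow m_j$. Thus both $\Diamond c \leq_\uparrow m_j$ and $m_j \leq_\uparrow \rhd c$ hold.

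Next I would feed each of these two $\leq_\uparrow$-inequalities into Proposition \ref{prop:specification-categorization}. From $\Diamond c \leq_\uparrow m_j$ this proposition yields $\mathcal{C}(\Diamond c,\beta)\subseteq\mathcal{C}(m_j,\beta)$ and $\mathbb{L}(\Diamond c,\beta)\subseteq\mathbb{L}(m_j,\beta)$ for every $\beta\in[0,1]$; from $m_j\leq_\uparrow\rhd c$ it yields $\mathcal{C}(m_j,\beta)\subseteq\mathcal{C}(\rhd c,\beta)$ and $\mathbb{L}(m_j,\beta)\subseteq\mathbb{L}(\rhd c,\beta)$. Chaining the two $\mathcal{C}$-inclusions and, independently, the two $\mathbb{L}$-inclusions gives exactly the two displayed chains of the corollary, uniformly in $j\in c$ and $\beta$.

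There is essentially no obstacle in this last step: all the substantive content has been discharged earlier, in the inductive arguments of Lemma \ref{lem: combination-ordering relation} (the specialization-matrix construction for $\cap_m$ and the up-set mass-transfer argument for $\cup_m$) and in the order-preservation of the stability index under $\leq_\uparrow$. The corollary is therefore bookkeeping, and the only point one must not skip is the conversion of $\leq_s$ to $\leq_\uparrow$ via Proposition \ref{prop:various ordering compared}, which is what makes Proposition \ref{prop:specification-categorization} applicable on the $\Diamond c$ side.
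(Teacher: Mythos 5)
Your proof is correct and follows essentially the same route as the paper, which states that the corollary ``follows immediately from Lemma \ref{lem: combination-ordering relation} and Proposition \ref{prop:specification-categorization}''; your explicit conversion of $\Diamond c \leq_s m_j$ into $\Diamond c \leq_\uparrow m_j$ via Proposition \ref{prop:various ordering compared} is exactly the implicit step the paper relies on (and is also packaged in the paper as Corollary \ref{cor:Dempster order}). No gaps; the rest is the same bookkeeping the paper intends.
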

Hence, for any coalition $c$ and any  parameter $\beta$, the categorization system generated by the agendas $\Diamond c$  (resp.~$\rhd c$) using the stability-based method with parameter $\beta$ is coarser (resp.~finer) than  the categorization system generated by the agenda $m_j$ of any agent $j \in c$. This can further justify understanding  the non-crisp agenda $\Diamond c$  (resp.~$\rhd c$) as the non-crisp counterpart of    $\Diamond c$  (resp.~$\rhd c$) as defined in \eqref{eq:Diamond-triangle}.

 \section{Meta-learning algorithm using interrogative agendas}\label{sec:algorithms}
 The extant outlier detection and classification algorithms based on FCA \cite{fu2004comparative,prokasheva2013classification,kuznetsov2013fitting,sugiyama2013semi,zhang2014outlier} are typically explainable, being based on data-analytic methods,   
 and determine an appropriate set of relevant features using some feature-selection methods. However, unlike the methods discussed in the previous sections, most feature-selection methods do not allow for different (sets of) features  to be  assigned different relevance. 
 In \cite{acar2023meta},   a meta-algorithm is introduced which
learns the (non-crisp) agenda which is best suited for given outlier detection or classification tasks. Given some  outlier detection or classification algorithm in input, which acts on a single 2-valued concept lattice, the meta-algorithm    applies it to all the concept lattices induced by the different agendas under consideration. 
The output of the meta-algorithm is obtained by combining the predictions associated with each of these concept lattices by means of some  aggregation function (e.g.~weighted average, maximum, minimum) which   weights  the different lattices according to  their relevance values. The best values for these weights relative to a given task can then  be learned  from training data using gradient descent. The following is an edited version of the algorithm in \cite{acar2023meta}, in which the specific aggregation function used in \cite{acar2023meta} is replaced by a generic one.

 \begin{algorithm}
\footnotesize
\caption{Meta-Learning Algorithm for Interrogative Agendas} \label{algo:metaxai}
\hspace*{\algorithmicindent} \textbf{Input:} - A formal context $\mathbb{P} = (A, X, I)$,\\
\hspace*{\algorithmicindent} \phantom{\textbf{Input:}} - a training set $T\subseteq A$, a map $f:T \to Lab$ assigning labels to the elements of   the training set, \\
\hspace*{\algorithmicindent} \phantom{\textbf{Input:}} - a set $\mathcal{Z} \subseteq\mathcal{P}(X)$ consisting of $n$  different agendas under consideration, \\
\hspace*{\algorithmicindent} \phantom{\textbf{Input:}} - an algorithm $Alg$ which takes  an object $a\in A$ and a concept lattice in $\mathcal{R} = \{\mathbb{P}^+_Y\mid Y\subseteq X\}$ in input, and outputs an element in \\
\hspace*{\algorithmicindent} \phantom{\textbf{Input:} - } $\mathbb{R}^{Lab}$ representing, for each $l\in Lab$,  the prediction of $Alg$ on the likelihood that  $l$ be assigned  to $a$; \\
\hspace*{\algorithmicindent} \phantom{\textbf{Input:}} - a loss function $loss: (\mathbb{R}^{Lab})^{|T|}\times (\mathbb{R}^{Lab})^{|T|} \to \mathbb{R}$, \\
\hspace*{\algorithmicindent} \phantom{\textbf{Input:}} - an aggregation function $Agg:({\mathbb{R}^{Lab}})^n \times \mathbb{R}^n \to \mathbb{R}^{Lab}$ which combines the outputs of all agendas given some weights $\overline{w} \in \mathbb{R}^{n}$,\\
\hspace*{\algorithmicindent} \phantom{\textbf{Input:}} - a number of training epochs $epochs$.\\
\hspace*{\algorithmicindent} \textbf{Output:} A model that classifies objects in $A$.
\begin{algorithmic}[1]
\Procedure{Train}{$\mathbb{P}$,  $T$, $f$, $Alg$, $Agg$, $loss$, $epochs$}
    \State $\mathbb{L}_1,\ldots,\mathbb{L}_n \leftarrow $ \textbf{compute} the concept lattices induced by the agendas in $\mathcal{Z}$
    \State \textbf{let} $predictions$ be an empty map from $A$ to $\mathbb{R}^C$
    \State \textbf{let} $\overline{w}$ be an array of random weights of length $n$
    \For{$e = 1, \ldots, epochs$ } 
        \For{$a \in A$, $k \in C$}
            \State $scores \leftarrow \big(Alg_k(a, \mathbb{L}_1),Alg_k(a, \mathbb{L}_2), \cdots, Alg_k(a, \mathbb{L}_n )\big)$
            \State $predictions[a][k]\leftarrow Agg(scores, \overline w)$
        \EndFor
        \State \textbf{update} $\overline{w}$ with an iteration of gradient descent (use $loss$)
    \EndFor
\EndProcedure
\end{algorithmic}
\end{algorithm}

As the cardinality of $\mathcal{R}$ is $2^{|X|}$, 
only mass functions with a limited number of focal elements are considered, which are 
determined by the objectives and constraints of the algorithm. For simplicity, the weights  in the learning phase are allowed to take values in $\mathbb{R}$. Interrogative agendas (mass functions) can be obtained by normalizing the absolute values of these weights. 

 The weights learned by the algorithm describe the importance of different sets of features relative to a given task. These weights  can be understood as a generalization of Shapely values \cite{molnar2020interpretable}, which estimate the importance of individual features in given machine learning tasks.

As  agendas  are represented as  Dempster-Shafer mass functions,  tools from Dempster-Shafer theory are available for furthering their theory and applications. For example,  
 agendas learned by different algorithms can be combined by means of different aggregation rules from Dempster-Shafer theory, so to obtain an ensemble algorithm; or agendas can be compared or aggregated with the agendas of human experts,  to allow for  human-machine collaboration. This is especially important in fields such as auditing, where human supervision of algorithms is necessary for ethical and legal reasons.

The weights learned  by the meta-algorithm arguably provide {\em global explainability}, since they encode the relevance of different sets of features in the classification decision of the algorithm.  Furthermore, the meta-algorithm is {\em locally explainable}, since its output for any object can be described and tracked back in terms of the individual outputs obtained by running the algorithm $Alg$ on the concept lattices induced by each agenda under consideration with the given input.

An implementation of this meta-algorithm for outlier detection, its comparison with other outlier detection algorithms, and the explanations provided by it are discussed in detail in \cite{boersma2023outlier}.  In the future, we intend to carry out a similar study for the classification task. 


 \section{Conclusion and further directions} \label{sec:Conclusion and further directions}
 \paragraph{Main contributions.}  The present paper introduces a {\em multi-agent framework} for describing and reasoning about the space of categorization systems over a given set of entities. This framework combines formal tools from FCA (namely formal contexts and their associated concept lattices) which are used to represent categorization systems, 
and tools from Dempster-Shafer theory (namely mass functions) which serve to encode the agents' epistemic attitudes (non-crisp interrogative agendas).

We illustrate this framework with a {\em case study} focusing on the problem of categorizing business processes for auditing purposes. 
 We  introduce a  {\em stability-based method} for associating a single categorization system (complete lattice) with a Dempster-Shafer mass function on the set of features of a formal context; we define the {\em upward restricted order} on mass functions (cf.~Definition \ref{def:uprestricted}), with respect to which the mapping of categorization systems to mass functions defined by  the stability-based method is order preserving (cf.~Proposition \ref{prop:specification-categorization}), and compare this order with other orderings on mass functions introduced in the literature.
  Finally, we introduce a {\em meta-learning algorithm} for various classification tasks which can provide both global and local explanations of  the outcomes of the algorithms to which it is applied.

%
   The generality of these  contributions makes them applicable  to classification problems far beyond 
   financial transactions. 
 Below, we discuss  some  directions for future research.



\paragraph{Implementing the meta-learning algorithm.}
In \cite{boersma2023outlier},   the meta-algorithm introduced in Section
\ref{sec:algorithms}  is implemented, by using a specific outlier detection algorithm $Alg$ as its input. 
In the future, we intend to study several such implementations, using various   FCA-based outlier detection and classification algorithms such as those in  
\cite{fu2004comparative,prokasheva2013classification,kuznetsov2013fitting,sugiyama2013semi,zhang2014outlier} and compare their performances. 

Interestingly, this study would also allow one  to {\em compare the weights} \cite{jousselme2001new} learned by the meta-algorithm with different algorithms in input, and hence to understand 
whether different algorithms assign  similar importance to a given set of features while performing the same task.

Moreover, agendas obtained from different algorithms can  be  {\em aggregated} using Dempster-Shafer based methods \cite{BI20081731,sentz2002combination}, and likewise,  agendas of {\em human experts} can  be combined or compared with the agendas learned by the meta-algorithm.

Another interesting direction for future research is to extend the meta-algorithm \ref{algo:metaxai} to {\em other tasks} where formal concept analysis has been successfully applied, such as data mining, information retrieval, attribute exploration,  knowledge management \cite{priss2006formal,qadi2010formal,poelmans2013formal,ganter2012formal, wille1996formal}. 


Specific instantiations of the meta-learning algorithm can often be fine-tuned to more efficient procedures. For example, the outlier detection algorithm proposed in \cite{boersma2023outlier} only computes the formal concepts generated by objects, rather than  the   full concept lattices.  
This and other similar fine-tunings  
can mitigate the  relatively high computational costs of these algorithms, while still adopting the fundamental idea laid of the meta-algorithm. In the future, we will also devote attention to efficiency procedures.


 
\subsection*{Declaration of interest and disclaimer} The authors report no conflicts of interest, and declare that they have no relevant or material financial interests related to the research in this paper. The authors alone are responsible for the content and writing of the paper, and the views expressed here are their personal views and do not necessarily reflect the position of their employer.

\bibliographystyle{plain}
\bibliography{ref}

\begin{thebibliography}{10}

\bibitem{acar2023meta}
Erman Acar, Andrea De~Domenico, Krishna Manoorkar, and Mattia Panettiere.
\newblock A meta-learning algorithm for interrogative agendas.
\newblock {\em arXiv preprint arXiv:2301.01837}, 2023.

\bibitem{alam2016latviz}
Mehwish Alam, Thi Nhu~Nguyen Le, and Amedeo Napoli.
\newblock Latviz: A new practical tool for performing interactive exploration
  over concept lattices.
\newblock In {\em CLA 2016-Thirteenth International Conference on Concept
  Lattices and Their Applications}, 2016.

\bibitem{ALOMARI2022103637}
Muhannad Alomari, Fangjun Li, David~C. Hogg, and Anthony~G. Cohn.
\newblock Online perceptual learning and natural language acquisition for
  autonomous robots.
\newblock {\em Artificial Intelligence}, 303:103637, 2022.

\bibitem{belohlavek1999fuzzy}
Radim B{\^e}lohl{\'a}vek.
\newblock Fuzzy galois connections.
\newblock {\em Mathematical Logic Quarterly}, 45(4):497--504, 1999.

\bibitem{ben2018ensemble}
Afef Ben~Brahim and Mohamed Limam.
\newblock Ensemble feature selection for high dimensional data: a new method
  and a comparative study.
\newblock {\em Advances in Data Analysis and Classification}, 12(4):937--952,
  2018.

\bibitem{BI20081731}
Yaxin Bi, Jiwen Guan, and David Bell.
\newblock The combination of multiple classifiers using an evidential reasoning
  approach.
\newblock {\em Artificial Intelligence}, 172(15):1731--1751, 2008.

\bibitem{birkhoff1940lattice}
Garrett Birkhoff.
\newblock {\em Lattice theory}, volume~25.
\newblock American Mathematical Soc., 1940.

\bibitem{boersma2020reducing}
M~Boersma, A~Maliutin, S~Sourabh, LA~Hoogduin, and D~Kandhai.
\newblock Reducing the complexity of financial networks using network
  embeddings.
\newblock {\em Scientific reports}, 10(1):1--15, 2020.

\bibitem{boersma2023outlier}
Marcel Boersma, Krishna Manoorkar, Alessandra Palmigiano, Mattia Panettiere,
  Apostolos Tzimoulis, and Nachoem Wijnberg.
\newblock Outlier detection using flexible categorization and interrogative
  agendas.
\newblock {\em Decision Support Systems}, 180:114196, 2024.

\bibitem{boersma2018financial}
Marcel Boersma, Sumit Sourabh, and Lucas Hoogduin.
\newblock Financial statement networks: an application of network theory in
  audit.
\newblock {\em Journal of Network Theory in Finance}, 2018.

\bibitem{bolon2019ensembles}
Ver{\'o}nica Bol{\'o}n-Canedo and Amparo Alonso-Betanzos.
\newblock Ensembles for feature selection: A review and future trends.
\newblock {\em Information fusion}, 52:1--12, 2019.

\bibitem{cimiano2005learning}
Philipp Cimiano, Andreas Hotho, and Steffen Staab.
\newblock Learning concept hierarchies from text corpora using formal concept
  analysis.
\newblock {\em Journal of artificial intelligence research}, 24:305--339, 2005.

\bibitem{cobb2006plausibility}
Barry~R Cobb and Prakash~P Shenoy.
\newblock On the plausibility transformation method for translating belief
  function models to probability models.
\newblock {\em International journal of approximate reasoning}, 41(3):314--330,
  2006.

\bibitem{cobb2003application}
George~W Cobb and Yung-Pin Chen.
\newblock An application of markov chain monte carlo to community ecology.
\newblock {\em The American Mathematical Monthly}, 110(4):265--288, 2003.

\bibitem{cohen1987alternative}
Joel~B Cohen and Kunal Basu.
\newblock Alternative models of categorization: Toward a contingent processing
  framework.
\newblock {\em Journal of Consumer Research}, 13(4):455--472, 1987.

\bibitem{conradie2021rough}
Willem Conradie, Sabine Frittella, Krishna Manoorkar, Sajad Nazari, Alessandra
  Palmigiano, Apostolos Tzimoulis, and Nachoem~M Wijnberg.
\newblock Rough concepts.
\newblock {\em Information Sciences}, 561:371--413, 2021.

\bibitem{conradie2017toward}
Willem Conradie, Sabine Frittella, A~Palmigiano, M~Piazzai, A~Tzimoulis, and
  Nachoem~M Wijnberg.
\newblock Toward an epistemic-logical theory of categorization.
\newblock {\em Electronic Proceedings in Theoretical Computer Science, EPTCS},
  251, 2017.

\bibitem{conradie2016categories}
Willem Conradie, Sabine Frittella, Alessandra Palmigiano, Michele Piazzai,
  Apostolos Tzimoulis, and Nachoem~M Wijnberg.
\newblock Categories: how i learned to stop worrying and love two sorts.
\newblock In {\em International Workshop on Logic, Language, Information, and
  Computation}, pages 145--164. Springer, 2016.

\bibitem{conradie2019logic}
Willem Conradie, Alessandra Palmigiano, Claudette Robinson, Apostolos
  Tzimoulis, and Nachoem~M Wijnberg.
\newblock The logic of vague categories.
\newblock {\em arXiv preprint arXiv:1908.04816}, 2019.

\bibitem{dash1997feature}
Manoranjan Dash and Huan Liu.
\newblock Feature selection for classification.
\newblock {\em Intelligent data analysis}, 1(1-4):131--156, 1997.

\bibitem{DENG2020106427}
Zhan Deng and Jianyu Wang.
\newblock A novel decision probability transformation method based on belief
  interval.
\newblock {\em Knowledge-Based Systems}, 208:106427, 2020.

\bibitem{denoeux2006cautious}
Thierry Den{\oe}ux.
\newblock The cautious rule of combination for belief functions and some
  extensions.
\newblock In {\em 2006 9th International Conference on Information Fusion},
  pages 1--8. IEEE, 2006.

\bibitem{enqvist2012modelling}
Sebastian Enqvist.
\newblock Modelling epistemic actions in interrogative belief revision.
\newblock {\em Journal of Logic and Computation}, 22(6):1335--1365, 2012.

\bibitem{freeman1993using}
Linton~C Freeman and Douglas~R White.
\newblock Using galois lattices to represent network data.
\newblock {\em Sociological methodology}, pages 127--146, 1993.

\bibitem{frittella2020toward}
Sabine Frittella, Krishna Manoorkar, Alessandra Palmigiano, Apostolos
  Tzimoulis, and Nachoem Wijnberg.
\newblock Toward a dempster-shafer theory of concepts.
\newblock {\em International Journal of Approximate Reasoning}, 125:14--25,
  2020.

\bibitem{fu2004comparative}
Huaiyu Fu, Huaiguo Fu, Patrik Njiwoua, and Engelbert~Mephu Nguifo.
\newblock A comparative study of fca-based supervised classification
  algorithms.
\newblock In {\em International Conference on Formal Concept Analysis}, pages
  313--320. Springer, 2004.

\bibitem{ganter1989conceptual}
Bernhard Ganter and Rudolf Wille.
\newblock Conceptual scaling.
\newblock In {\em Applications of combinatorics and graph theory to the
  biological and social sciences}, pages 139--167. Springer, 1989.

\bibitem{ganter2012formal}
Bernhard Ganter and Rudolf Wille.
\newblock {\em Formal concept analysis: mathematical foundations}.
\newblock Springer Science \& Business Media, 2012.

\bibitem{hao2015k}
Fei Hao, Geyong Min, Zheng Pei, Doo-Soon Park, and Laurence~T Yang.
\newblock $ k $-clique community detection in social networks based on formal
  concept analysis.
\newblock {\em IEEE Systems Journal}, 11(1):250--259, 2015.

\bibitem{hayes2004bipartite}
Jonathan Hayes and Claudio Gutierrez.
\newblock Bipartite graphs as intermediate model for rdf.
\newblock In {\em International Semantic Web Conference}, pages 47--61.
  Springer, 2004.

\bibitem{jain1988algorithms}
Anil~K Jain and Richard~C Dubes.
\newblock {\em Algorithms for clustering data}.
\newblock Prentice-Hall, Inc., 1988.

\bibitem{jain1999data}
Anil~K Jain, M~Narasimha Murty, and Patrick~J Flynn.
\newblock Data clustering: a review.
\newblock {\em ACM computing surveys (CSUR)}, 31(3):264--323, 1999.

\bibitem{jousselme2001new}
Anne-Laure Jousselme, Dominic Grenier, and {\'E}loi Boss{\'e}.
\newblock A new distance between two bodies of evidence.
\newblock {\em Information fusion}, 2(2):91--101, 2001.

\bibitem{klawonn1992dynamic}
Frank Klawonn and Philippe Smets.
\newblock The dynamic of belief in the transferable belief model and
  specialization-generalization matrices.
\newblock In {\em Uncertainty in artificial intelligence}, pages 130--137.
  Elsevier, 1992.

\bibitem{kuznetsov2004machine}
Sergei~O Kuznetsov.
\newblock Machine learning and formal concept analysis.
\newblock In {\em International Conference on Formal Concept Analysis}, pages
  287--312. Springer, 2004.

\bibitem{kuznetsov2013fitting}
Sergei~O Kuznetsov.
\newblock Fitting pattern structures to knowledge discovery in big data.
\newblock In {\em International conference on formal concept analysis}, pages
  254--266. Springer, 2013.

\bibitem{kuznetsov2013knowledge}
Sergei~O Kuznetsov and Jonas Poelmans.
\newblock Knowledge representation and processing with formal concept analysis.
\newblock {\em Wiley interdisciplinary reviews: Data mining and knowledge
  discovery}, 3(3):200--215, 2013.

\bibitem{Latviz}
LatViz.
\newblock Lattice visualization.
\newblock \url{latviz.loria.fr}, 2016.

\bibitem{lazebnik2024algorithm}
Teddy Lazebnik, Svetlana Bunimovich-Mendrazitsky, and Avi Rosenfeld.
\newblock An algorithm to optimize explainability using feature ensembles.
\newblock {\em Applied Intelligence}, 54(2):2248--2260, 2024.

\bibitem{lewandowsky2000competing}
Stephan Lewandowsky, Mike Kalish, and Thomas~L Griffiths.
\newblock Competing strategies in categorization: Expediency and resistance to
  knowledge restructuring.
\newblock {\em Journal of Experimental Psychology: Learning, Memory, and
  Cognition}, 26(6):1666, 2000.

\bibitem{LI201829}
Feng Li, Shoumei Li, and Thierry Denœux.
\newblock k-cevclus: Constrained evidential clustering of large dissimilarity
  data.
\newblock {\em Knowledge-Based Systems}, 142:29--44, 2018.

\bibitem{li2017feature}
Jundong Li, Kewei Cheng, Suhang Wang, Fred Morstatter, Robert~P Trevino,
  Jiliang Tang, and Huan Liu.
\newblock Feature selection: A data perspective.
\newblock {\em ACM computing surveys (CSUR)}, 50(6):1--45, 2017.

\bibitem{linardatos2020explainable}
Pantelis Linardatos, Vasilis Papastefanopoulos, and Sotiris Kotsiantis.
\newblock Explainable ai: A review of machine learning interpretability
  methods.
\newblock {\em Entropy}, 23(1):18, 2020.

\bibitem{molnar2020interpretable}
Christoph Molnar.
\newblock {\em Interpretable machine learning}.
\newblock Lulu. com, 2020.

\bibitem{nothman2013learning}
Joel Nothman, Nicky Ringland, Will Radford, Tara Murphy, and James~R Curran.
\newblock Learning multilingual named entity recognition from wikipedia.
\newblock {\em Artificial Intelligence}, 194:151--175, 2013.

\bibitem{opitz1999feature}
David~W Opitz.
\newblock Feature selection for ensembles.
\newblock {\em AAAI/IAAI}, 379(384):3, 1999.

\bibitem{pavlopoulos2018bipartite}
Georgios~A Pavlopoulos, Panagiota~I Kontou, Athanasia Pavlopoulou, Costas
  Bouyioukos, Evripides Markou, and Pantelis~G Bagos.
\newblock Bipartite graphs in systems biology and medicine: a survey of methods
  and applications.
\newblock {\em GigaScience}, 7(4):giy014, 2018.

\bibitem{PEREIRA20161}
Gon$ç$alo Pereira, Rui Prada, and Pedro~A. Santos.
\newblock Integrating social power into the decision-making of cognitive
  agents.
\newblock {\em Artificial Intelligence}, 241:1--44, 2016.

\bibitem{poelmans2013formal}
Jonas Poelmans, Sergei~O Kuznetsov, Dmitry~I Ignatov, and Guido Dedene.
\newblock Formal concept analysis in knowledge processing: A survey on models
  and techniques.
\newblock {\em Expert systems with applications}, 40(16):6601--6623, 2013.

\bibitem{prediger1997logical}
Susanne Prediger.
\newblock Logical scaling in formal concept analysis.
\newblock In {\em International Conference on Conceptual Structures}, pages
  332--341. Springer, 1997.

\bibitem{prediger1999lattice}
Susanne Prediger and Rudolf Wille.
\newblock The lattice of concept graphs of a relationally scaled context.
\newblock In {\em International Conference on Conceptual Structures}, pages
  401--414. Springer, 1999.

\bibitem{priss2006formal}
Uta Priss.
\newblock Formal concept analysis in information science.
\newblock {\em Annu. Rev. Inf. Sci. Technol.}, 40(1):521--543, 2006.

\bibitem{prokasheva2013classification}
Olga Prokasheva, Alina Onishchenko, and Sergey Gurov.
\newblock Classification methods based on formal concept analysis.
\newblock {\em FCAIR 2012--Formal Concept Analysis Meets Information
  Retrieval}, page~95, 2013.

\bibitem{qadi2010formal}
Abderrahim~El Qadi, Driss Aboutajedine, and Yassine Ennouary.
\newblock Formal concept analysis for information retrieval.
\newblock {\em arXiv preprint arXiv:1003.1494}, 2010.

\bibitem{ravasz2002hierarchical}
Erzs{\'e}bet Ravasz, Anna~Lisa Somera, Dale~A Mongru, Zolt{\'a}n~N Oltvai, and
  A-L Barab{\'a}si.
\newblock Hierarchical organization of modularity in metabolic networks.
\newblock {\em science}, 297(5586):1551--1555, 2002.

\bibitem{rosenfeld2021better}
Avi Rosenfeld.
\newblock Better metrics for evaluating explainable artificial intelligence.
\newblock In {\em Proceedings of the 20th international conference on
  autonomous agents and multiagent systems}, pages 45--50, 2021.

\bibitem{ROY2005170}
Deb Roy.
\newblock Semiotic schemas: A framework for grounding language in action and
  perception.
\newblock {\em Artificial Intelligence}, 167(1):170--205, 2005.
\newblock Connecting Language to the World.

\bibitem{sentz2002combination}
Kari Sentz, Scott Ferson, et~al.
\newblock {\em Combination of evidence in Dempster-Shafer theory}, volume 4015.
\newblock Citeseer, 2002.

\bibitem{shafer1976mathematical}
Glenn Shafer.
\newblock {\em A mathematical theory of evidence}.
\newblock Princeton university press, 1976.

\bibitem{vskopljanac2014formal}
Frano {\v{S}}kopljanac-Ma{\v{c}}ina and Bruno Bla{\v{s}}kovi{\'c}.
\newblock Formal concept analysis--overview and applications.
\newblock {\em Procedia Engineering}, 69:1258--1267, 2014.

\bibitem{smets2005decision}
Philippe Smets.
\newblock Decision making in the tbm: the necessity of the pignistic
  transformation.
\newblock {\em International journal of approximate reasoning}, 38(2):133--147,
  2005.

\bibitem{smith1998alternative}
Edward~E Smith, Andrea~L Patalano, and John Jonides.
\newblock Alternative strategies of categorization.
\newblock {\em Cognition}, 65(2-3):167--196, 1998.

\bibitem{sugiyama2013semi}
Mahito Sugiyama and Akihiro Yamamoto.
\newblock Semi-supervised learning on closed set lattices.
\newblock {\em Intelligent Data Analysis}, 17(3):399--421, 2013.

\bibitem{tang2014feature}
Jiliang Tang, Salem Alelyani, and Huan Liu.
\newblock Feature selection for classification: A review.
\newblock {\em Data classification: Algorithms and applications}, page~37,
  2014.

\bibitem{tang2019feature}
Xiaochuan Tang, Yuanshun Dai, and Yanping Xiang.
\newblock Feature selection based on feature interactions with application to
  text categorization.
\newblock {\em Expert Systems with Applications}, 120:207--216, 2019.

\bibitem{wille1996formal}
Rudolf Wille and Bernhard Ganter.
\newblock Formal concept analysis, 1996.

\bibitem{WU200938}
Qiang Wu and Zongtian Liu.
\newblock Real formal concept analysis based on grey-rough set theory.
\newblock {\em Knowledge-Based Systems}, 22(1):38--45, 2009.

\bibitem{xue2021multi}
Yu~Xue, Yihang Tang, Xin Xu, Jiayu Liang, and Ferrante Neri.
\newblock Multi-objective feature selection with missing data in
  classification.
\newblock {\em IEEE Transactions on Emerging Topics in Computational
  Intelligence}, 6(2):355--364, 2021.

\bibitem{yager2008classic}
Ronald~R Yager and Liping Liu.
\newblock {\em Classic works of the Dempster-Shafer theory of belief
  functions}, volume 219.
\newblock Springer, 2008.

\bibitem{YIN2019105020}
Ruiping Yin, Kan Li, Guangquan Zhang, and Jie Lu.
\newblock A deeper graph neural network for recommender systems.
\newblock {\em Knowledge-Based Systems}, 185:105020, 2019.

\bibitem{zhang2014outlier}
Jifu Zhang, Sulan Zhang, Kai~H Chang, and Xiao Qin.
\newblock An outlier mining algorithm based on constrained concept lattice.
\newblock {\em International Journal of Systems Science}, 45(5):1170--1179,
  2014.

\end{thebibliography}

 \appendix
 
\section{Financial statements network example} \label{sec:dataset}
In this section, we describe the (toy) financial statements network used to build the formal context discussed in Section \ref{sec:Example}, and the estimated importance values of different features according to the different agendas encountered throughout the paper obtained using the pignistic transform and plausibility transform. 

\subsection{Database and its associated formal context}
 Table \ref{database table} is a small database showing different transactions and financial accounts used to obtain a small financial statements network considered in the examples.
 
{
\singlespacing
 \begin{table}[h]
 \begin{tabular}{|c|c|l|c|} 
 \hline
 \textbf{ID} & \textbf{TID} & \textbf{FA name} &  \textbf{Value}\\
 \hline
 1 & 1& revenue & -100\\
 \hline
 2& 1& cost of sales& +100\\
 \hline
 3 & 2& revenue & -400\\
 \hline
 4 & 2& personal expenses  & +400\\
 \hline
5 & 3& other expenses  & +125\\
\hline
 6 & 3& cost of sales  & +375\\
 \hline
 7 & 3& revenue & -500\\
 \hline
 8 & 4& tax  & -125\\
 \hline
 9 & 4& cost of sales  & +500\\
 \hline
 10 & 4& revenue & -375\\
 \hline
  11 & 5& tax  & -10\\
  \hline
 12 & 5& cost of sales  & +200\\
 \hline
 13 & 5& revenue & -190\\
 \hline
 14 & 6& other expenses  & +50\\
 \hline
 15 & 6& cost of sales  & +450\\
 \hline
 16 & 6& inventory & -500\\
 \hline
\end{tabular}
 \begin{tabular}{|c|c|l|c|} 
 \hline
 \textbf{ID} & \textbf{TID} & \textbf{FA name} &  \textbf{Value}\\
 \hline
 17 & 7 & cost of sales  & +400\\
 \hline
 18 & 7 & revenue  & -300\\
 \hline
 19 & 7 & inventory  & -100\\
 \hline
 20 & 8& revenue & -150\\
 \hline
 21& 8& cost of sales & +150\\
 \hline
 22 & 9& revenue & -250\\
 \hline
 23& 9& cost of sales &+250\\
 \hline
  24 & 10& tax & -250\\
  \hline
 24& 10& personal expenses & +250\\
 \hline
  26 & 11& revenue & -250\\
  \hline
 27& 11& personal expenses & +175\\
 \hline
 28& 11& other  expenses & +75\\
 \hline
  29 & 12& revenue & -250\\
  \hline
  30 & 12& tax & -50\\
  \hline
 31& 12& personal expenses  &+150\\
 \hline
 32& 12& other  expenses  &+150\\
 \hline
  \end{tabular}
 \caption{A small database with 12 business processes and 6 financial accounts. We use same TID to denote all credit and debit activities relating to a single business process.}
 \label{database table}
  \end{table}
  }
 \smallskip
Table  \ref{table: context table}  shows the many valued context obtained by interpreting business processes as objects and financial accounts as features. The value of the incidence relation denotes the share of given financial account in a given business process. 

{\singlespacing
\centering
\begin{table}[H]
   \begin{tabular}{|c|l|c|}
 \hline 
\textbf{Business }&\,\,\textbf{Financial}& \textbf{Share of} \\
\textbf{ process} \textbf{(a)}&\,\,\textbf{ account}\textbf{ (x)}& \textbf{value} \textbf{(I(a,x))}\\
  \hline
   1 ($a_1$)& revenue ($x_2$)  & -1\\
   \hline
  1 ($a_1$)& cost of sales ($x_3$)& +1\\
  \hline
  2 ($a_2$)& revenue ($x_2$)& -1\\
  \hline
 2 ($a_2$) & personal expenses  ($x_4$)& +1\\
 \hline
 3 ($a_3$)& other expenses ($x_6$) & +0.25\\
 \hline
  3 ($a_3$)& cost of sales ($x_3$) & +0.75\\
  \hline
 3 ($a_3$) & revenue ($x_2$)& -1\\
 \hline
 4 ($a_4$) & tax  ($x_1$) & -0.25\\
 \hline
 4 ($a_4$)& cost of sales ($x_3$) & +1\\
 \hline
 4 ($a_4$)& revenue ($x_2$) & -0.75\\
 \hline
 5 ($a_5$)& tax  ($x_1$) & -0.05\\
 \hline
 5 ($a_5$)& cost of sales ($x_3$) & +1\\
 \hline
 5 ($a_5$)& revenue ($x_2$)& -0.95\\
 \hline
 6 ($a_6$)& other expenses ($x_6$) & +0.1\\
 \hline
6 ($a_6$) & cost of sales ($x_3$)   & +0.9\\
\hline
 6 ($a_6$)& inventory ($x_5$)& -1\\
 \hline
\end{tabular}
   \begin{tabular}{|c|l|c|}
 \hline 
\textbf{Business }&\,\,\textbf{Financial}& \textbf{Share of} \\
\textbf{ process} \textbf{(a)}&\,\,\textbf{ account}\textbf{ (x)}& \textbf{value} \textbf{(I(a,x))}\\
  \hline
 7 ($a_7$)& cost of sales ($x_3$) & +1\\
 \hline
  7 ($a_7$)& inventory ($x_5$) & -0.25\\
  \hline
 7 ($a_7$)& revenue  ($x_2$)& -0.75\\
 \hline
 8 ($a_8$) & revenue ($x_2$)& -1\\
 \hline
  8 ($a_8$)& cost of sales ($x_3$)& +1\\
  \hline
 9 ($a_9$)& revenue ($x_2$)& -1\\
 \hline
9 ($a_9$)& other expenses ($x_6$)&+1\\
\hline
  10 ($a_{10}$)& tax ($x_1$) & -1\\
  \hline
 10  ($a_{10}$) & personal expenses ($x_4$) & +1\\
 \hline
 11  ($a_{11}$)& revenue($x_2$) & -1\\
 \hline
 11 ($a_{11}$)& personal expenses ($x_4$) & +0.7\\
 \hline
 11 ($a_{11}$)& other  expenses ($x_6$) & +0.3\\
 \hline
 12 ($a_{12}$)& revenue($x_2$) & -0.83\\
 \hline
 12  ($a_{12}$)& tax($x_1$)  & -0.17\\
 \hline
12  ($a_{12}$)& personal expenses ($x_4$)  &+0.5\\
\hline
 12  ($a_{12}$)& other  expenses ($x_6$)  &+0.5\\
 \hline
\end{tabular}
\caption{The formal context obtained from transaction database given in Table \ref{database table}. $I(a,x)=0$ for any $(a,x)$ pair not present in the table.}
\label{table: context table}
\end{table}
}

 \subsection{Pignistic and plausibility transforms} \label{ssec:Importance of different features in these categorizations}
 Tables \ref{tab:pignistic} and \ref{tab:plausibility} report the estimated values of the importance of different features (financial accounts) calculated via pignistic and plausibility transformations  of  the non-crisp agendas (mass functions) discussed in Section \ref{sec:Example}.  

 \begin{table}[H]
 {
     \centering
     {%
     \begin{tabular}{|c|c|c|c|c|c|c|}
     \hline
        Agenda & $x_1$&$x_2$& $x_3$&$x_4$&$x_5$&$x_6$\\
    \hline
          $m_1$ & 0.67 & 0.067&0.067&0.067&0.067&0.067\\
    \hline 
          $m_2$ & 0.683 & 0.183&0.033&0.033&0.033&0.033\\
    \hline 
          $m_3$ & 0.317 & 0.317&0.017&0.017&0.017&0.317\\
    \hline 
          $m$ & 0.885 & 0.085&0.001&0.001&0.001&0.025\\
    \hline 
          $m'$ & 0.239 & 0.239&0.095&0.095&0.095&0.239\\
    \hline 
     \end{tabular}%
     }
     \caption{Importance estimates via pignistic transformation}\label{tab:pignistic}
     }
        
 \end{table} 
 \begin{table}[H]
 {
     \centering
     {%
     \begin{tabular}{|c|c|c|c|c|c|c|}
     \hline
        Agenda & $x_1$&$x_2$& $x_3$&$x_4$&$x_5$&$x_6$\\
    \hline
          $m_1$ & 0.333 & 0.133&0.133&0.133&0.133&0.133\\
    \hline 
          $m_2$ & 0.435 & 0.217 &0.087&0.087&0.087&0.087\\
    \hline 
          $m_3$ & 0.303 & 0.303&0.030&0.030&0.030&0.303\\
    \hline 
          $m$ & 0.767 & 0.153&0.006&0.006&0.006&0.061\\
    \hline 
          $m'$ & 0.213 & 0.213&0.121&0.121&0.121&0.213\\
    \hline 
     \end{tabular}%
     }
     \caption{Importance estimates via plausibility transformation}\label{tab:plausibility}
     }
 \end{table}

 \section{Concept lattices associated with various interrogative agendas}\label{sec:lattice diagrams}
 
 The present appendix section collects the Hasse diagrams, drawn with the help of  LatViz \cite{Latviz, alam2016latviz}, of various concept lattices associated with the interrogative agendas in Section \ref{sec:Example}.

 \begin{figure*}
    \centering
    \includegraphics [width=1.0\textwidth]{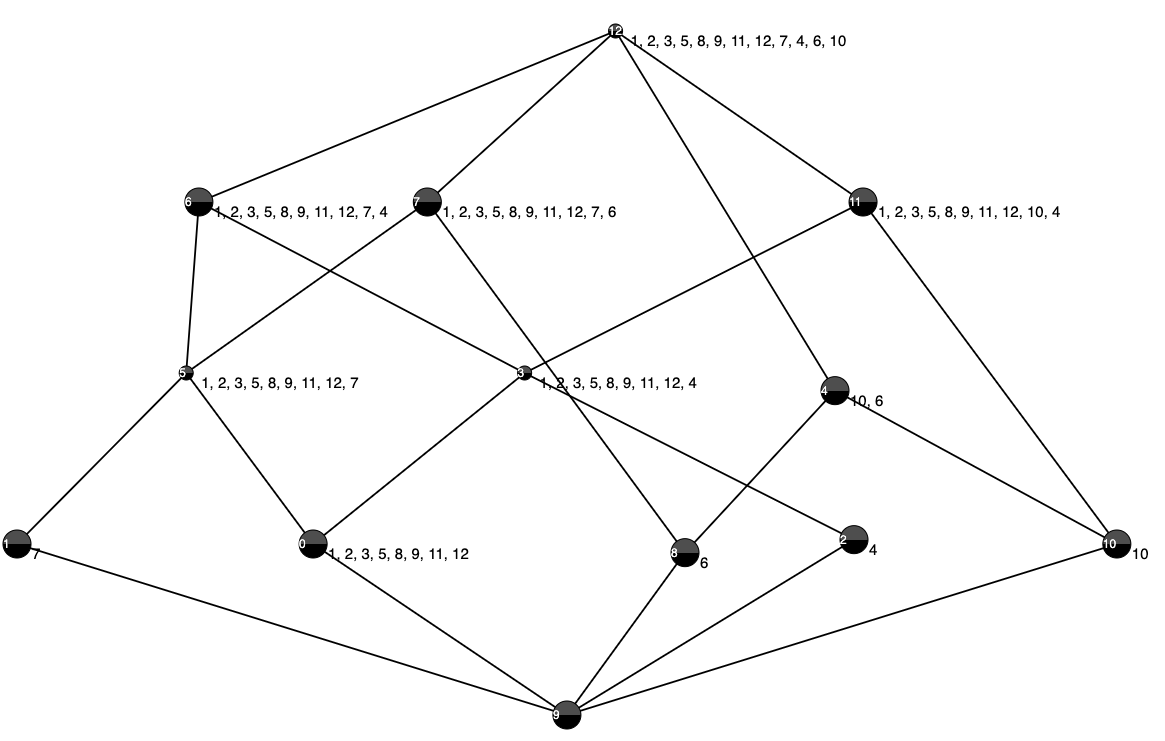}
    \caption{Concept lattice associated with the  crisp agenda $\{x_1, x_2, x_5\}$ of agent $j_1$}
    \label{fig:lattice 1}
\end{figure*}

\begin{figure*}
    \centering
    \includegraphics [width=1.0\textwidth]{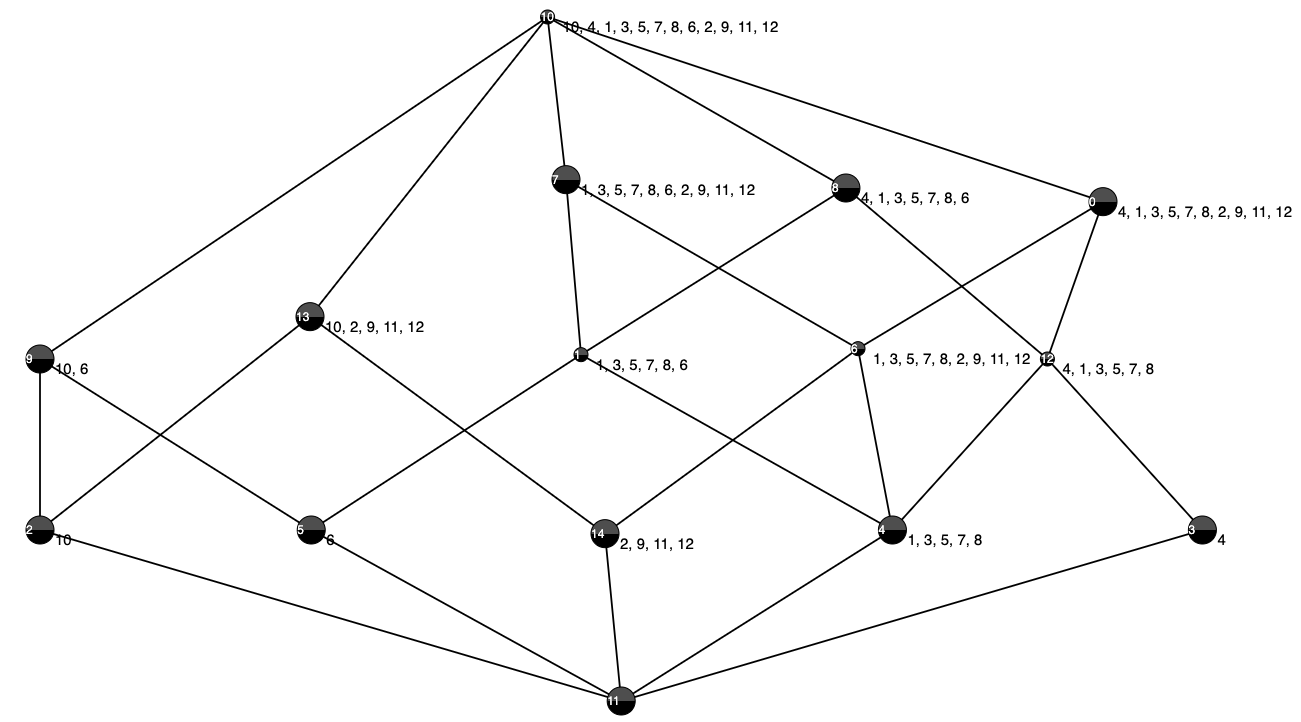}
    \caption{Concept lattice associated with the crisp agenda  $\{x_1, x_2, x_3\}$ of agent $j_2$}
    \label{fig:lattice 3}
\end{figure*}

\begin{figure*}
    \centering
    \includegraphics [width=1.0\textwidth]{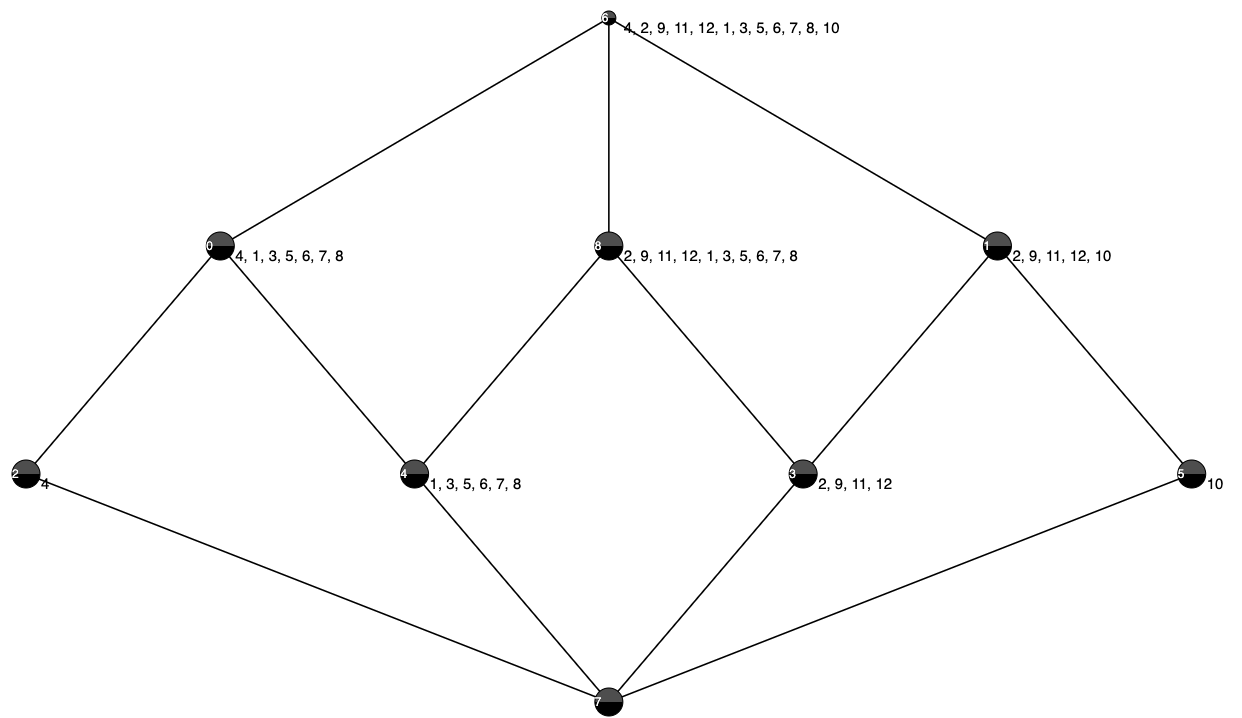}
    \caption{Concept lattice associated with the crisp agenda  $\{x_1, x_3\}$ of agent $j_3$}
    \label{fig:lattice 5}
\end{figure*}

\begin{figure*}
    \centering
    \includegraphics[width=1.0\textwidth]{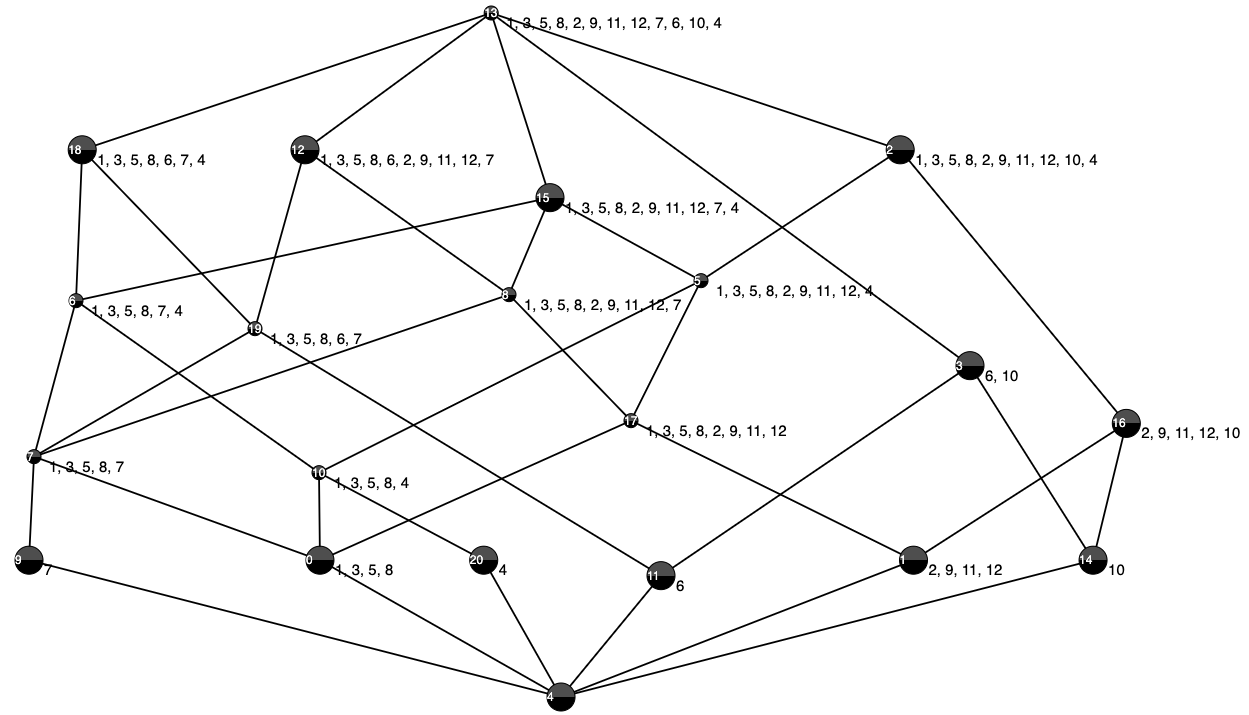}
    \caption{Concept lattice associated with the crisp  agenda  $\rhd c =\{x_1,x_2,x_3,x_5\}$}
    \label{fig:lattice 2}
\end{figure*}

\begin{figure*}
    \centering
    \includegraphics[width=1.0\textwidth]{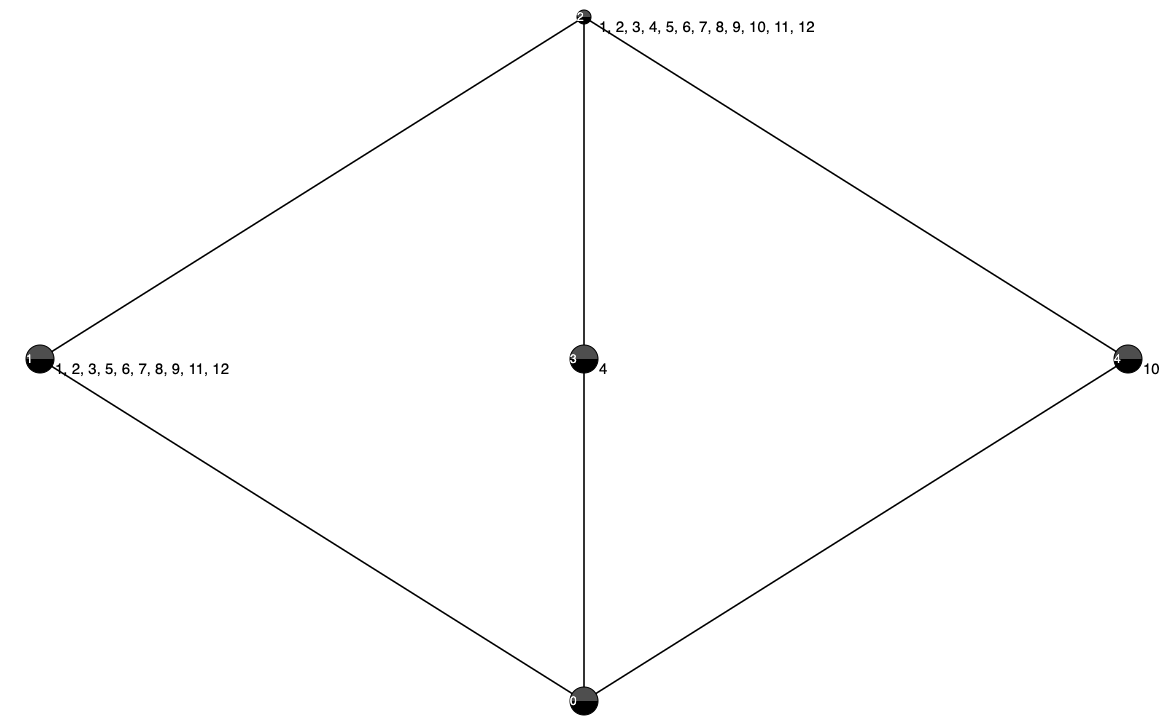}
    \caption{
    This concept lattice is all of the following:   (1) the concept lattice associated with   the crisp agenda $\Diamond c=\{x_1\}$ in case, 
    (2) most preferred categorization associated with the non-crisp agendas $m_1$ of $j_1$, $m_2$ of $j_2$, and $\oplus c$ (3) categorization obtained from non-crisp agendas $m_1$ and $\oplus c$ using stability-based method for $\beta=0.5$. } 
    \label{fig:lattice 6}
\end{figure*}

\begin{figure*}
    \centering
    \includegraphics[width=1.0\textwidth]{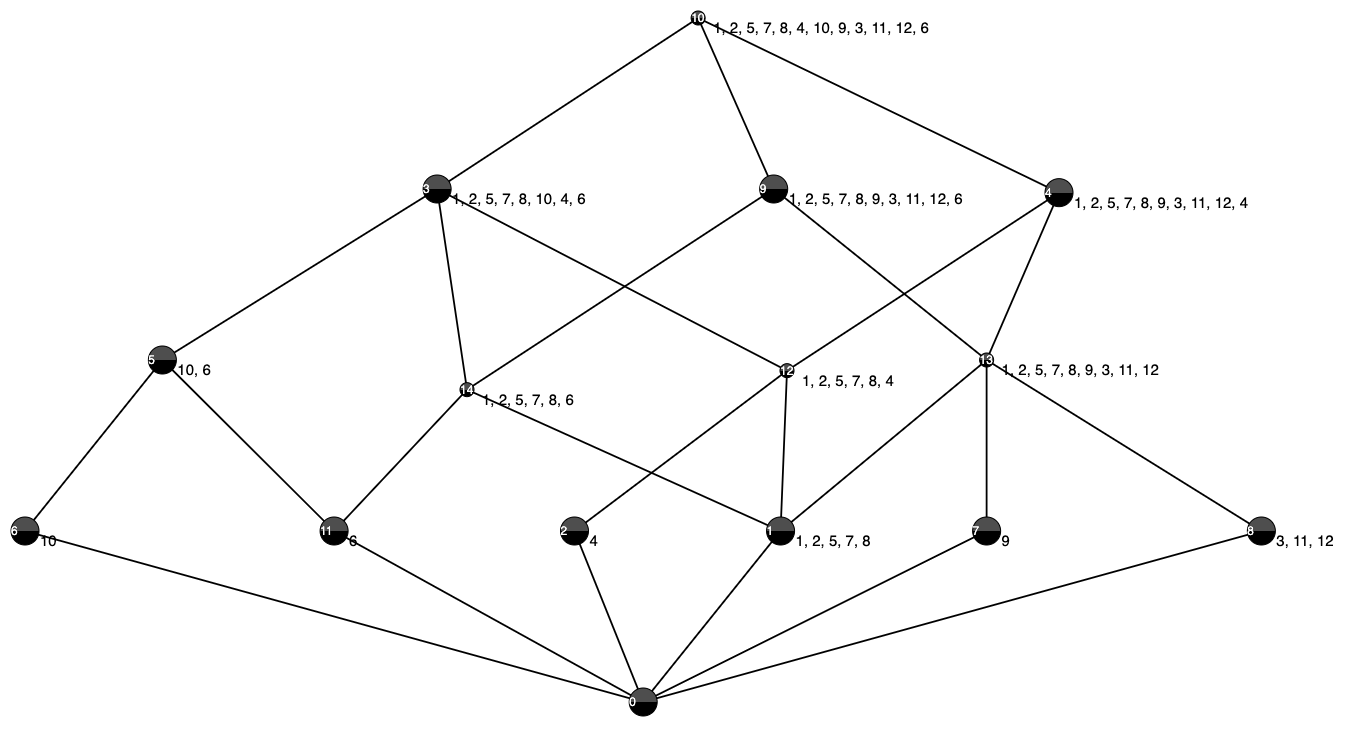}
    \caption{
    This concept lattice is both (1) the 
    most preferred  categorization for the non-crisp agenda $m_3$ of $j_3$ and  (2) the categorization obtained from non-crisp agenda $m_3$ using stability-based method for $\beta=0.5$.}
    \label{fig:lattice 7}
\end{figure*}

\begin{figure*}
    \centering
    \includegraphics [width=1.0\textwidth]{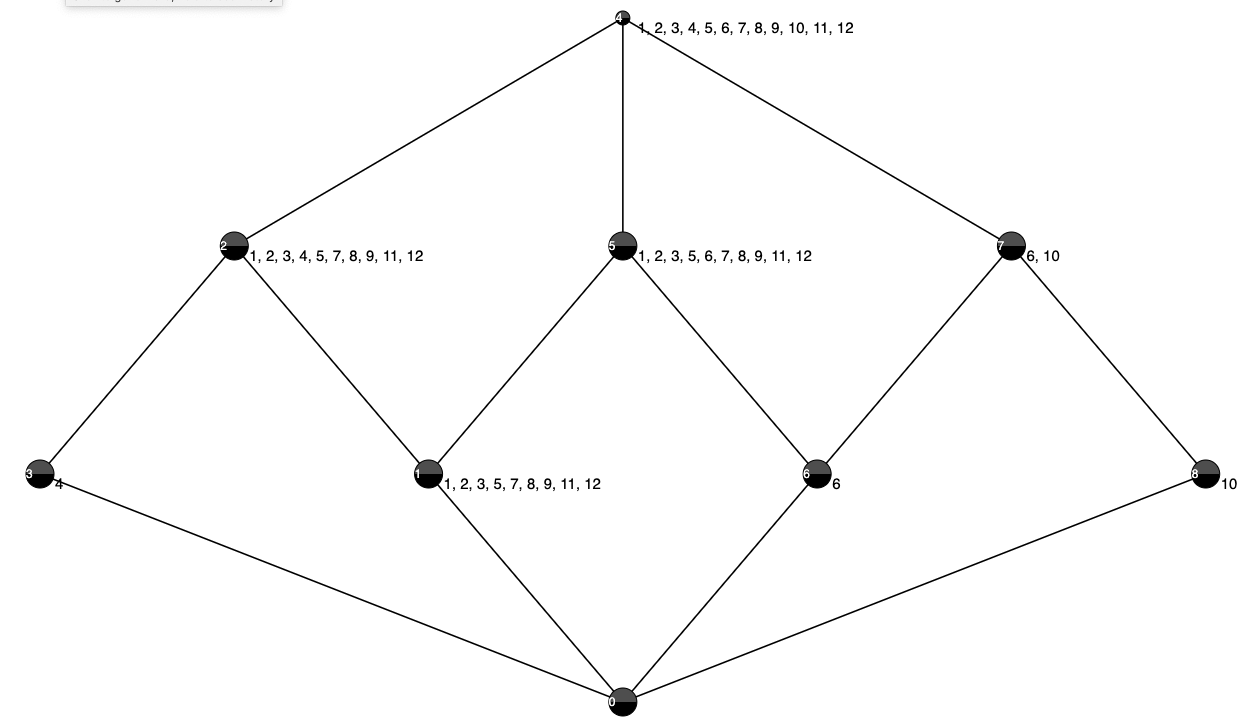}
    \caption{Concept lattice representing  the categorization system associated with  non-crisp agenda  $m_2$ using stability-based method for $\beta=0.5$.}
    \label{fig:lattice 4}
\end{figure*}

\begin{figure*}
    \centering
    \includegraphics[width=1.0\textwidth]{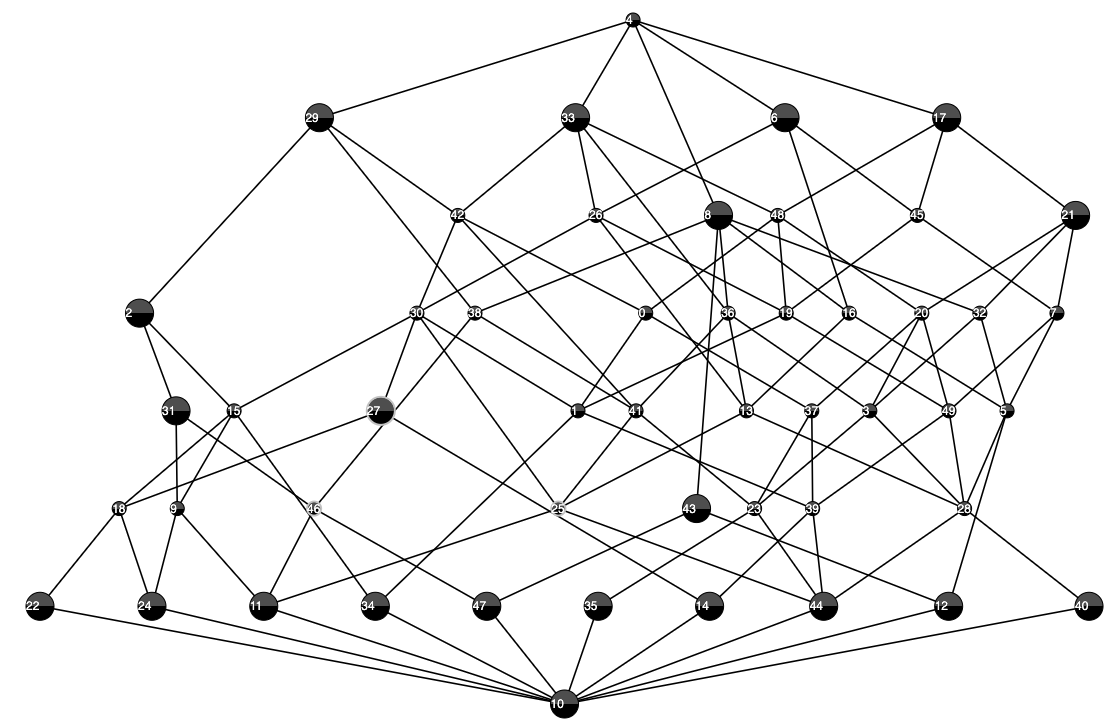}
    \caption{The concept lattice  representing both (1) the categorization obtained from non-crisp agenda $\rhd c$ using stability-based method for $\beta=0.5$ (2) the concept lattice associated with  the (crisp)  agenda consisting of all the features in $X$}
    \label{fig:lattice 10}
\end{figure*}
\end{document}